\def\eqref#1{equation~\ref{#1}}
\def\1{\bm{1}}
\DeclareMathAlphabet{\mathsfit}{\encodingdefault}{\sfdefault}{m}{sl}
\SetMathAlphabet{\mathsfit}{bold}{\encodingdefault}{\sfdefault}{bx}{n}
\theoremstyle{plain}
\newtheorem{theorem}{Theorem}[section]
\newtheorem{proposition}[theorem]{Proposition}
\theoremstyle{definition}
\theoremstyle{remark}
\newcommand{\eat}[1]{}
\newcommand{\ie}{{\em i.e.,~}}      % i.e.
\definecolor{mypurple}{RGB}{79, 179, 255}
\definecolor{mygold}{RGB}{188, 189, 34}
\title{The Final Layer Holds the Key: A Unified and Efficient GNN Calibration Framework}
\author{%
Jincheng Huang$^{1}$ \quad Jie Xu$^{1}$ \quad Xiaoshuang Shi$^1$ \quad Ping Hu$^1$ \quad
\textbf{Lei Feng}$^{2*}$ \quad \textbf{Xiaofeng Zhu}$^{1}$\thanks{Corresponding authors} \\
$^1$School of Computer Science and Engineering,\\ 
University of
Electronic Science and Technology of China \\ $^2$Southeast University
% \texttt{\{yangk,tianjunz,jegonzal,klein\}@berkeley.edu}\\
% \texttt{\{cummins,bcui,benoitsteiner,yuandongt\}@fb.com}\\
% \texttt{linnan\_wang@brown.edu}
}
\begin{document}

\maketitle

\begin{abstract}
Graph Neural Networks (GNNs) have demonstrated remarkable effectiveness on graph-based tasks. However, their predictive confidence is often miscalibrated, typically exhibiting \emph{under-confidence}, which harms the reliability of their decisions. Existing calibration methods for GNNs normally introduce additional calibration components, which fail to capture the intrinsic relationship between the model and the prediction confidence, resulting in limited theoretical guarantees and increased computational overhead. To address this issue, we propose a simple yet efficient graph calibration method. We establish a unified theoretical framework revealing that model confidence is jointly governed by class-centroid-level and node-level calibration at the final layer. Based on this insight, we theoretically show that reducing the weight decay of the final-layer parameters alleviates GNN under-confidence by acting on the class-centroid level, while node-level calibration acts as a finer-grained complement to class-centroid level calibration, which encourages each test node to be closer to its predicted class centroid at the final-layer representations. 
%Secondly, our theoretical analysis shows that a test node's confidence is significantly influenced by its similarity to the class centroids of training nodes in the final-layer representations and its graph distance from the training nodes, which inspires us to propose a new post-hoc method in the test phase, by adjusting the representations of test nodes in the final layer to bring them closer to their classified class centroid. Finally, we show that the two proposed methods effectively complement each other, leading to improved calibration performance. 
% Experimental results show that our method achieves state-of-the-art performance.
Extensive experiments validate the superiority of our method.
\end{abstract}

\section{Introduction}
Graph Neural Networks (GNNs) have achieved significant success in recent years, becoming a powerful tool for learning from graph-structured data. Their ability to effectively model complex relationships between nodes and edges has led to widespread applications, %across various domains, 
including molecular biology \citep{molecular1,molecular2}, recommendation systems \citep{graph4rec1,graph4rec2}, and knowledge graphs \citep{kenw_graph1,kenw_graph2}. Notably, even if the accuracy of GNNs meets high standards, the reliability of model outputs is critically important in real-world deployment. 
%In other words, GNNs should know when their predictions might be wrong. 
However, recent studies \citep{CaGCN,GATS,GCL-MM} have revealed that GNNs normally generate unreliable confidence, characterized by \emph{under-confidence}—a sharp contrast to the over-confidence commonly observed in traditional Deep Neural Networks (DNNs). Consequently, traditional calibration methods cannot directly address the calibration issue of GNNs. 
%Therefore, graph calibration has become a research hotspot.

Current graph calibration methods can be roughly divided into two categories \ie regularization methods and post-hoc methods. Regularization methods \citep{GCL-MM,stadler2021graph} introduce extra regularization terms specifically designed to calibrate GNNs during training. However, regularization methods were shown to struggle with effectively balancing accuracy and calibration \citep{GATS,DGC}. Post-hoc methods \citep{CaGCN,GATS,simcalib} are proposed to tune the confidence after model training, preserving the model's accuracy while improving the calibration performance, thereby overcoming the above issue. Most existing post-hoc calibration methods for GNNs adopt a common framework, where an additional calibration GNN is trained on the validation set to learn suitable temperature scaling coefficients, which are then applied via temperature scaling (TS) \citep{TS} for calibration. The main differences among these methods lie in the inputs to the calibration GNN. For example, CaGCN \citep{CaGCN} takes the output probabilities of the base GNN; GATS \citep{GATS} incorporates neighborhood similarity and predictive distributions, as well as other related factors; and SimCalib \citep{simcalib} leverages node similarity and homophily.
%For example, CaGCN \citep{CaGCN} trains another GCN model on a validation set to learn the coefficient of temperature scaling (TS) \citep{TS} to adjust the prediction confidence.

Despite the effectiveness of existing post-hoc graph calibration methods, there are still two limitations that need to be addressed. First, previous studies focus on alleviating the confidence bias through external operations
%({\em e.g.,} directly modifying the prediction probabilities)
, which lacks an in-depth exploration of the intrinsic relationship between the model itself and the under-confidence issue. Second, 
existing post-hoc calibration methods identify some potential factors (\ie the input of the calibration GNN) that are associated with model confidence, without uncovering how these factors affect it. Hence, they have to rely on additional neural networks and supervision from a validation set to learn the mapping in a black-box manner.
%, leading to extra training overhead and a lack of interpretability.

% existing post-hoc calibration methods typically rely on additional neural networks trained on a held-out validation set to learn suitable temperature scaling mappings. This black-box learning process introduces significant training overhead and lacks interpretability, making it difficult to understand how the calibration adjustments are derived or how they relate to underlying model representations.}

% existing post-hoc graph calibration methods are all based on temperature scaling. They require additional training of a calibration model on the validation set, to ensure that the learned temperature is instance-wise. These methods not only make the model overly dependent on high-quality validation sets but also incur extra computational overhead.
\begin{figure*}[t!]
	% \vspace{-0.35cm}
	%\centering
	\subfigure[]{
		\begin{minipage}[t]{0.55\linewidth} % 0.5
			\centering
			\includegraphics[scale=0.25]{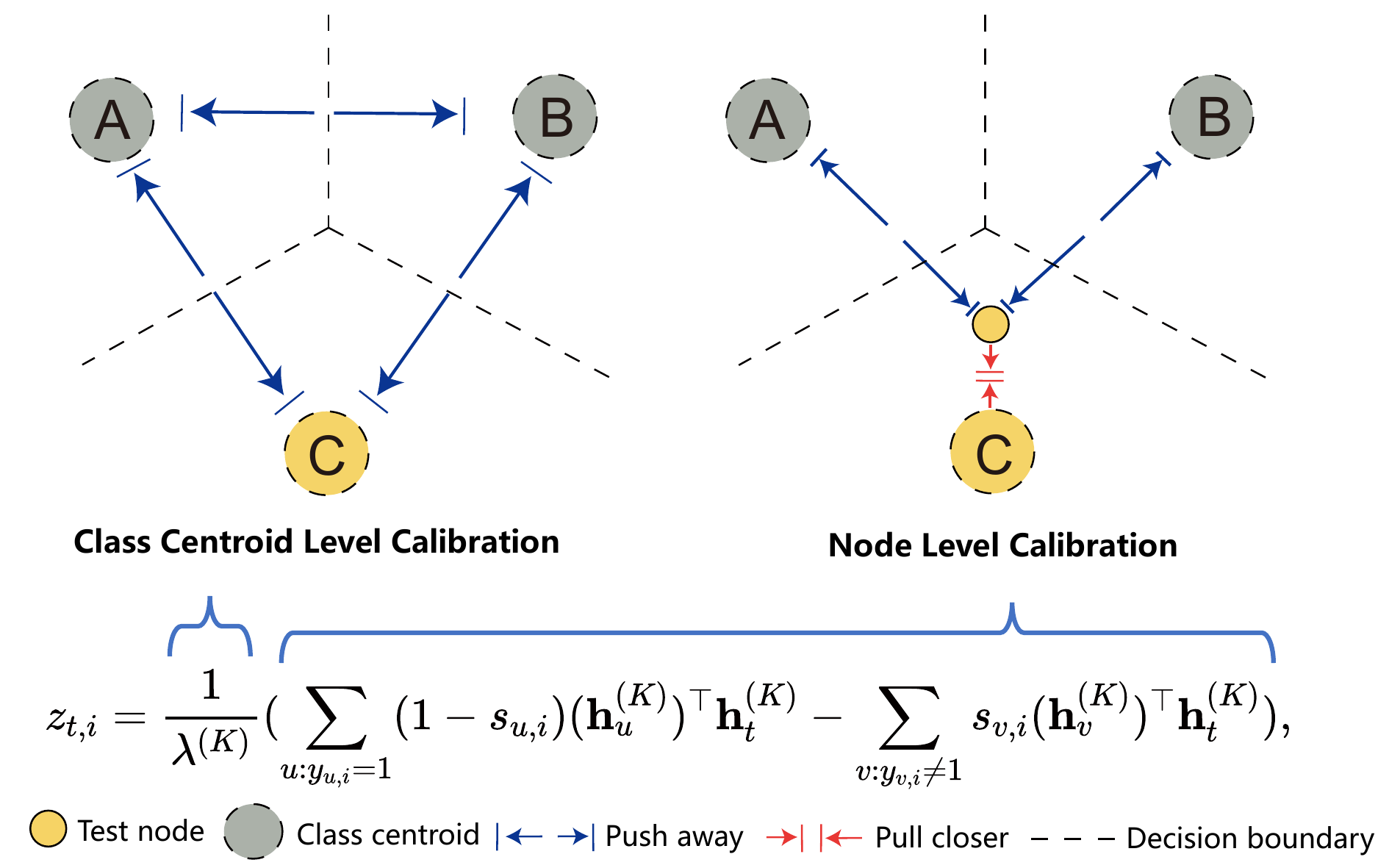} %0.32
			%\caption{fig1}
		\end{minipage}%
	}\subfigure[]{
		\begin{minipage}[t]{0.4\linewidth}
			%\centering
			\includegraphics[scale=0.28]{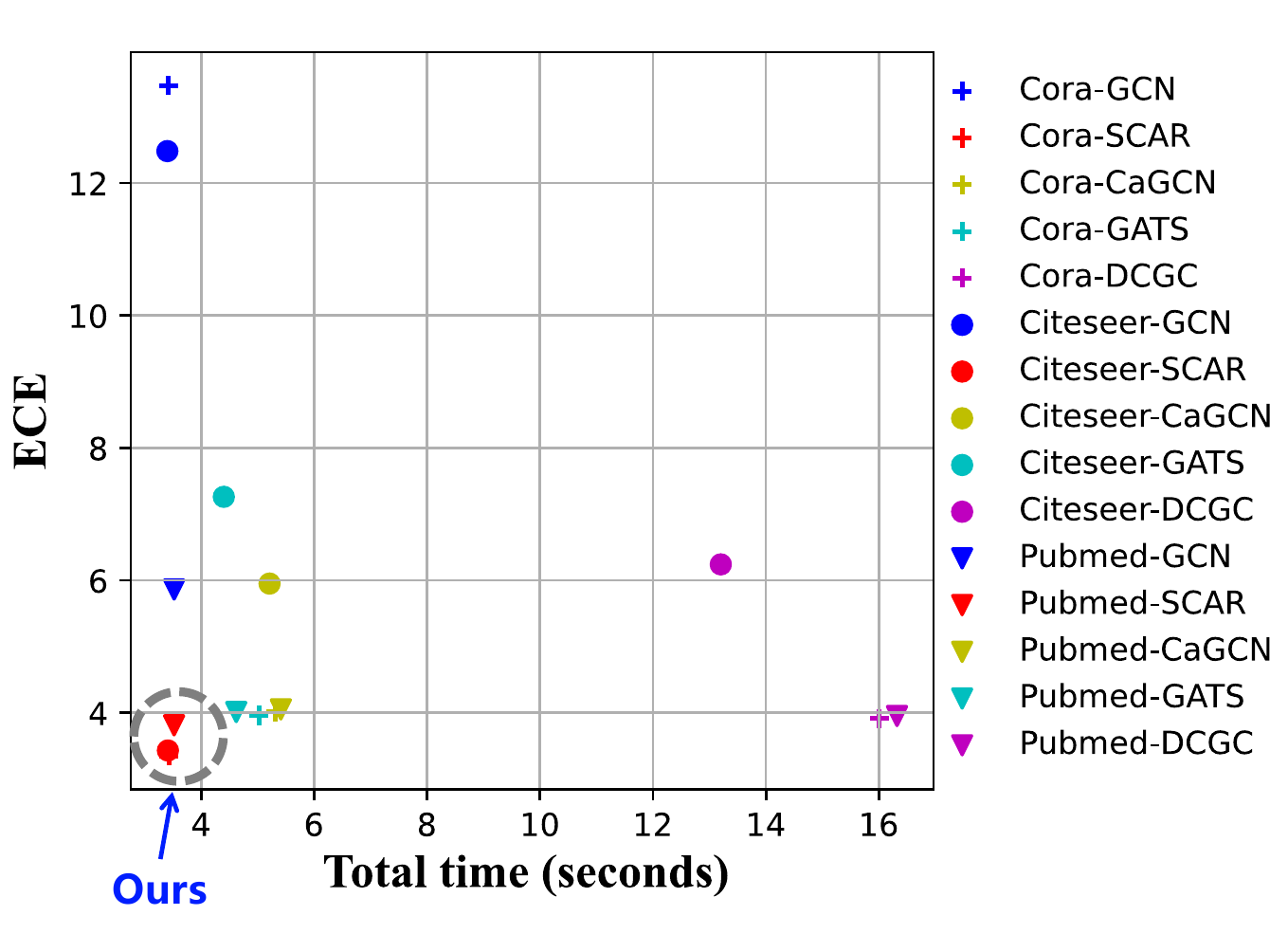}
			%\caption{fig2}
		\end{minipage}
    }
  \caption{(a) An illustration of the proposed SCAR. The balls within the dotted box represent the class centroids, while the yellow dots inside the solid box represent the test nodes classified as class $C$. 
  The output logits are determined by two components. The first is the distance between class centroids (\ie class-centroid-level calibration), which is controlled by the weight decay coefficient of the final layer. By reducing this coefficient, we push the class centroids away from each other, increasing the class separability. The second component is the distance from each test node to its corresponding class centroid (\ie node-level calibration). By reducing this distance as a post-hoc method, we further refine the confidence of each test node in a more fine-grained manner to alleviate under-confidence. 
 (b) Scatter plot illustrating the trade-off between runtime and ECE (lower is better). Scatters closer to the lower-left corner exhibit superior performance.} 
	\label{fig:main}
\end{figure*}

To address the above limitations, this paper conducts a theoretical analysis of the relationship between model parameter updates and prediction confidence, and proposes a novel method, \textbf{S}imple yet efficient graph \textbf{CA}lib\textbf{R}ation method (\textbf{SCAR}), {as shown in Figure \ref{fig:main}(a). } To do this, we first reveal that the weight decay imposed on the final-layer parameters exacerbates the under-confidence issue of GNNs. To address this issue, we propose reducing the weight decay of the final-layer parameters to get a better-calibrated GNN. Moreover, our in-depth analysis shows that this adjustment is equivalent to enlarging the inter-class distance at the class-centroid level, thereby improving class separability. 
Since the confidence of each test node is closely related to its own representations, we introduce a node-level calibration strategy that encourages each test node to move closer to its predicted class centroid, serving as a fine-grained complement to class-centroid-level calibration. This method directly adjusts the representation of test nodes by reducing their distances to the centroids, without learning additional mapping functions, thereby offering a training-free and interpretable solution. Finally, we establish a unified theoretical framework demonstrating that model confidence is jointly governed by both class-centroid-level and node-level calibration, highlighting the completeness and coherence of our approach, as shown at the bottom of Figure \ref{fig:main}(a). Based on this, the proposed \textsc{SCAR} achieves state-of-the-art performance in both effectiveness and efficiency, as shown in Figure~\ref{fig:main}(b).
% the final-layer parameters directly influence the final confidence through theoretical analysis. Specifically, the weight decay imposed on the final-layer parameters is equivalent to applying a temperature coefficient greater than 1 in calculating the prediction confidence by Softmax, which exacerbates the under-confidence of GNNs. To address this issue, we propose reducing the weight decay of the final-layer parameters to train a well-calibrated GNN. Then, in the test phase, we theoretically show that a test node's confidence depends on the similarity (related to the graph structure distance) between its final-layer representations and the class centroids of training nodes. Inspired by this finding, we propose a novel graph-based post-hoc method that adjusts the final-layer representation of each test node to move closer to its classified class centroid. This method does not change the model's prediction results and requires no additional training. 
% We unify the proposed methods for the training and test phases (as shown in Figure \ref{fig:main}). The method in the training phase increases the distance between class centroids during training, thereby providing distinguishable class centroids for the method in the test phase. This highlights the complementarity of the two methods, showcasing their synergy and completeness.

Compared with previous graph calibration methods\footnote{The details of related work are summarized in the Appendix \ref{ap:related_work}.}, our contributions can be summarized as follows:
\begin{itemize}
    \item To the best of our knowledge, we provide the first work to theoretically reveal that the weight decay imposed on the final-layer parameters exacerbates the under-confidence of GNNs by affecting the model at the class-centroid level. We address this issue by reducing the weight decay of the final-layer parameters.
    \item We propose a node-level calibration as a training-free post-hoc method, serving as a fine-grained complement to class-centroid-level calibration and offering a principled foundation for future post-hoc confidence calibration approaches.

    \item Theoretically demonstrate that model confidence is jointly governed by the proposed class-centroid-level and node-level calibration and empirically validate the effectiveness and efficiency of the proposed method across diverse settings, outperforming numerous state-of-the-art graph calibration methods.
\end{itemize}

\section{Preliminary}
\textbf{Notations.} Given a graph $\mathcal{G} = (V, E, \mathbf{X}, \mathbf{Y})$, where $V$ is the node set and $E$ is the edge set. The original node representation is denoted by the feature matrix $\mathbf{X}\in \mathbb{R}^{n\times d}$ where $n$ is the number of nodes and $d$ is
the number of features for each node. The label matrix is denoted by  $\mathbf{Y} \in \mathbb{R}^{n\times c}$ with a total of $c$ classes. 
% in which every row is a one-hot vector, the true class is 1, 0 for others, while only the few nodes have labels,
The sparse matrix $\mathbf{A} \in \mathbb{R}^{n\times n}$ is the adjacency matrix of $\mathcal{G}$. Let $\mathbf{D} = \mathrm{diag}(d_{1},d_{2},\cdots,d_{n})$ be the degree matrix, where $d_{i} = \sum_{j\in \mathcal{N}_{i}} a_{ij}$ is the degree of node $i$, the symmetric normalized adjacency matrix is represented as $\widehat{\mathbf{A}} = \widetilde{\mathbf{D}}^{-\frac{1}{2}}\widetilde{\mathbf{A}}\widetilde{\mathbf{D}}^{-\frac{1}{2}} $ where $\widetilde{\mathbf{A}} = \mathbf{A} + \mathbf{I}$, $\mathbf{I}$ is the identity matrix and $\widetilde{\mathbf{D}}$ is the degree matrix of $\widetilde{\mathbf{A}} $. 

\textbf{Graph Neural Networks.}
Graph Neural Networks are powerful models for processing graph data. Most GNNs follow the message-passing framework. Specifically, every node iteratively updates its representations by aggregating the representations of its neighbors. Formally, the $i$-th layer of a GNN can be expressed as:
\begin{equation}
\begin{aligned}
\label{eq1} \mathbf{f}_{v}^{(i)} = \mathrm{COMB}(\mathbf{h}_{v}^{(i)}), \mathbf{h}_{v}^{(i)} = \mathrm{AGG}( \{\mathbf{f}^{(i-1)}_{u}: u\in \mathcal{N}_{v} \})),
\end{aligned}
\end{equation}
where $\mathbf{f}^{(i)}_{v}$ is the representation of node $v$ at the $i$-th layer, $\mathbf{h}^{(i)}_{v}$ is the representation after aggregation and $\mathcal{N}_{v}$ is a set of nodes adjacent to $v$. $\mathrm{AGG}(\cdot)$ denotes a differentiable, permutation-invariant function (e.g., sum, mean), $\mathrm{COMB}(\cdot)$ denotes a differentiable transformation function such as multi-layer perceptrons (MLPs). Different types of GNNs have different choices of $\mathrm{COMB}(\cdot)$ and $\mathrm{AGG}(\cdot)$. If there is no $\mathrm{AGG}(\cdot)$ in Eq. (\ref{eq1}) (\ie $\mathbf{h}^{(i)}_{v} = \mathbf{f}^{(i-1)}_{v}$), then it becomes a traditional MLP.

\textbf{Expected Calibration Error (ECE).} Let $\mathbf{s}_{v}$ be the prediction probabilities of node $v$ and the number of layers in the model is $K$, we have $\mathbf{s}_{v}=\mathrm{softmax}(\mathbf{z}_{v})$, where $\mathbf{z}_{v}$ is the logits of node $v$. The confidence for node $v$ is $p_{v} = \mathrm{max}_{j} s_{v,j}$. A model is said to be perfectly calibrated \citep{CaGCN} if its confidence scores exactly correspond to the actual probabilities. For example, with $0.8$ being the average confidence, 80\% of the predicted examples should be correct. Nevertheless, GNNs are often found to exhibit \emph{under-confidence}, where the confidence scores are lower than the actual probability. For example, more than 80\% of predictions may be correct at an average confidence of 0.8.
% GNN and \emph{under-confidence} occurs when more than 80\% of predictions are correct at an average confidence of 0.8. Formally, we have 
%a GNN is perfectly calibrated if
\begin{gather}
    \text{PC: }~ \mathrm{Pr}[\hat{y}_{v} = y_{v} | p_{v} = p] = p, \forall p \in [0,1],~~~ \text{UC: }~ \mathrm{Pr}[\hat{y}_{v} = y_{v} | p_{v} = p] > p, \forall p \in [0,1],
\end{gather}
where $\mathrm{\mathbf{PC}}$ and $\mathrm{\mathbf{UC}}$ denote perfect calibration and under-confidence, respectively.
The calibration quality can be quantified by the expected calibration error (ECE) \citep{ECE1,TS}. It divides predictions into confidence intervals (bins, \ie $\mathrm{B}_{1}, \dots, \mathrm{B}_{M}$) and calculates the accuracy and average predicted confidence within each bin. The ECE is then computed as the weighted average of the absolute differences between accuracy and confidence across all bins, with weights proportional to the number of predictions in each confidence bin. \textbf{A lower ECE means} that the confidence more closely matches its actual accuracy, thus indicating \textbf{better calibration.} Formally, the ECE can be defined as
\begin{equation}
\begin{aligned}
\mathrm{ECE}=\sum\nolimits_{m=1}^{M} \frac{\left|B_{m}\right|}{|N_{\mathrm{test}}|}\left|\operatorname{acc}\left(B_{m}\right)-\operatorname{conf}\left(B_{m}\right)\right| ~
\begin{array}{l}
s.t.~\left\{\begin{matrix}
\operatorname{acc}\left(B_{m}\right)=\frac{1}{\left|B_{m}\right|} \sum_{i \in B_{m}} \mathbf{1}\left(y_{i}=\hat{y}_{i}\right)\\
\operatorname{conf}\left(B_{m}\right)=\frac{1}{\left|B_{m}\right|} \sum_{i \in B_{m}} p_{i}
\end{matrix}\right.
 ,
\end{array}
\end{aligned}
\end{equation}
where $|N_{\mathrm{test}}|$ is the number of test nodes and $\mathbf{1}(\cdot)$ is the indicator function.

\section{The Proposed Method}
\label{sec:method}
% GNNs primarily face the challenge of under-confidence, thus preserving accuracy and improving confidence are the direction of graph calibration. 

\textbf{Overview.} To mitigate the under-confidence of GNNs by refining its inherent causes, we first theoretically show that the weight decay imposed on the final layer intensifies under-confidence. Thus, simply reducing the final layer's weight decay leads to better-calibrated predictions. Moreover, the in-depth analysis shows that this adjustment increases the distance between class centroids by directly operating at the class-centroid level. Given that a test node’s confidence is also closely related to its own representation, we introduce a node-level calibration strategy as a fine-grained complement to class-centroid-level calibration. This strategy reduces the distance between each test node and its predicted class centroid in the final-layer representation space. Furthermore, our unified theoretical analysis reveals that model confidence is jointly governed by class-centroid-level and node-level calibration. An overview of the proposed method is presented in Figure \ref{fig:main}(a).

\subsection{Class-Centroid Level Calibration}
\label{sec:weight_decay}
Previous calibration methods calibrate the confidence through external methods. For example, GCL \citep{GCL-MM} designed a loss function that constrains the model outputs to have low entropy probabilities (\ie high confidence). CaGCN \citep{CaGCN} training a calibration model on the validation set to scale the output probabilities. However, they do not facilitate an in-depth exploration of the intrinsic relationship between the model itself and under-confidence. To address this issue, we analyze the impact of final-layer parameters on confidence during model updating, leveraging this to guide the model to produce better-calibrated confidence scores.

Specifically, the most commonly used cross-entropy loss with weight decay for node $v$ are as follows:
\begin{equation}
\label{eq:4}
    \mathcal{L}_{v} = -\sum\nolimits_{i=1}^{c}y_{v,i}\log s_{v,i} +  \sum  \nolimits_{k}^{K} \frac{\lambda^{(k)}}{2}||\mathbf{W}^{(k)}||_{F}^{2},
\end{equation}
where $\lambda^{(k)}$ is a regularization coefficient of layer $k$ that controls the strength of the weight decay.

Then we have the following theorem (the proof is provided in the Appendix \ref{ap:th1}):

\begin{theorem}
\label{thm:1}
Given the learning rate (i.e., $\eta$) and the final-layer parameters (i.e., $\mathbf{W}^{(K)}$). For an arbitrary node $v$ in the training stage, its output probabilities on class $i$ (i.e., $s_{v,i}$) is updated by:
\begin{equation}
\begin{aligned}
s'_{v,i} = \frac{e^{b_{v,i}/\tau}}{e^{b_{v,i}/\tau}+ \sum_{j\ne i}^{c}e^{b_{v,j}/\tau}\cdot \psi_{i,j}},
\begin{array}{l}
s.t.~\left\{\begin{matrix}
\psi_{i,j} = e^{\eta (s_{v,i} - y_{v,i} - s_{v,j} + y_{v,j})(\mathbf{h}_{v}^{(K)})^{\top}\mathbf{h}_{v}'^{(K)}} 
\\
\tau = \frac{1}{1-\eta \lambda^{(K)}} 
\\
b_{v,i} = (\mathbf{W}_{:,i}^{(K)})^{\top}\mathbf{h}_{v}'^{(K)}
\end{matrix}\right.

\end{array}
\end{aligned}
\end{equation}
where $s_{v,i}'$ and $\mathbf{h}_{v}'^{(K)}$ are the GNN updated results of $s_{v,i}$ and $\mathbf{h}_{v}^{(K)}$ after the next epoch.
\end{theorem}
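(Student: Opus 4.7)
The plan is to derive the updated softmax output by carrying out one gradient descent step on $\mathbf{W}^{(K)}$ under the regularized cross-entropy loss of Eq.~(\ref{eq:4}) and then rearranging the new logits into softmax form. First I would compute the gradient with respect to the $i$-th column of the final-layer weight matrix. For the cross-entropy term, the well-known identity
\begin{equation}
\nabla_{\mathbf{W}^{(K)}_{:,i}} \Bigl(-\sum_{j=1}^{c} y_{v,j}\log s_{v,j}\Bigr) = (s_{v,i} - y_{v,i})\,\mathbf{h}_v^{(K)}
\end{equation}
holds since $\mathbf{z}_v = (\mathbf{W}^{(K)})^\top \mathbf{h}_v^{(K)}$. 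Adding the $\ell_2$ penalty contributes $\lambda^{(K)} \mathbf{W}^{(K)}_{:,i}$, so a single gradient step gives
\begin{equation}
\mathbf{W}'^{(K)}_{:,i} = (1-\eta\lambda^{(K)})\,\mathbf{W}^{(K)}_{:,i} \;-\; \eta (s_{v,i} - y_{v,i})\,\mathbf{h}_v^{(K)}.
\end{equation}

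Next I would form the new logit by pairing the updated weights with the updated representation $\mathbf{h}_v'^{(K)}$ (which evolves through the earlier layers). This yields
\begin{equation}
z'_{v,i} = (\mathbf{W}'^{(K)}_{:,i})^{\top} \mathbf{h}_v'^{(K)} = (1-\eta\lambda^{(K)})\, b_{v,i} - \eta (s_{v,i} - y_{v,i})\,(\mathbf{h}_v^{(K)})^{\top}\mathbf{h}_v'^{(K)},
\end{equation}
where I have used the definition $b_{v,i} = (\mathbf{W}^{(K)}_{:,i})^{\top}\mathbf{h}_v'^{(K)}$. Substituting $1-\eta\lambda^{(K)} = 1/\tau$ turns the first term into $b_{v,i}/\tau$, giving a clean decomposition of the new logit into a temperature-scaled base term plus a cross-entropy correction.

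Finally I would pass $\{z'_{v,i}\}$ through the softmax and factor the common correction $e^{-\eta(s_{v,i}-y_{v,i})(\mathbf{h}_v^{(K)})^{\top}\mathbf{h}_v'^{(K)}}$ out of both numerator and denominator. What remains in the denominator is a sum of terms of the form $e^{b_{v,j}/\tau}\cdot e^{\eta(s_{v,i}-y_{v,i}-s_{v,j}+y_{v,j})(\mathbf{h}_v^{(K)})^{\top}\mathbf{h}_v'^{(K)}}$, which is exactly $e^{b_{v,j}/\tau}\psi_{i,j}$. Isolating the $j=i$ term (for which $\psi_{i,i}=1$) recovers the stated formula.

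The main obstacle is bookkeeping rather than a deep step: one must be careful that $b_{v,i}$ is defined using the \emph{old} final-layer weights paired with the \emph{new} representation $\mathbf{h}_v'^{(K)}$, so the $(1-\eta\lambda^{(K)})$ factor from weight decay acts only on the base inner product while the cross-entropy correction uses the pre-update $\mathbf{h}_v^{(K)}$ contracted against $\mathbf{h}_v'^{(K)}$. Writing the new logits in this mixed form is what allows the weight-decay coefficient to be absorbed cleanly into the temperature $\tau$. Once this separation is made explicit, the derivation reduces to a direct softmax manipulation and the claimed identity follows.
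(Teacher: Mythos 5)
Your proposal is correct and follows essentially the same route as the paper's proof: a single gradient step on $\mathbf{W}^{(K)}_{:,i}$ under the regularized cross-entropy loss, contraction of the updated weights with the updated representation $\mathbf{h}_v'^{(K)}$, absorption of the $(1-\eta\lambda^{(K)})$ factor into the temperature $\tau$, and factoring the common correction out of the softmax. Your explicit remark about the mixed old-weight/new-representation definition of $b_{v,i}$ is exactly the bookkeeping the paper relies on (and your writing of the logit update is in fact cleaner than the paper's, which contains a stray $(1-\eta\lambda^{(K)})$ prefactor typo in its first display of $z'_{v,i}$).
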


From Theorem \ref{thm:1}, we can observe that during the update of output probabilities, the weight decay of the final layer only affects $\tau$, and $\tau = \frac{1}{1-\eta \lambda^{(K)}} > 1$. Therefore, Theorem \ref{thm:1} reveals that applying weight decay to the final-layer parameters (\ie $\mathbf{W}^{(K)}$) is equivalent to adding a temperature coefficient greater than $1$ to the output probability, which increases the entropy of output probabilities, thereby exacerbating the under-confidence of GNNs. Hence, reducing $\lambda^{(K)}$ can help mitigate the under-confidence issue specific to GNNs.
%More importantly, we reveal that GNNs with standard weight decay inherently induce under-confidence, analogous to applying TS with $\tau > 1$. Existing methods \citep{TS,CaGCN} typically counteract this effect post hoc by using $0<\tau< 1$, which results in a redundant back-and-forth adjustment and risks overcompensation. In contrast, our method performs calibration within the model, thereby avoiding such issues.

Although prior work \citep{TS} has observed that adjusting the global weight decay can influence model confidence, it often results in significant drops in accuracy, making it unsuitable as a practical calibration tool. To support this claim and highlight the strengths of our proposed method, we empirically compare the accuracy and ECE of global and final-layer weight decay in Figure \ref{fig:wd_vs_global}. The experiment details can be found in the section \ref{sec:exp}. The results show that regulating global weight decay is far less effective than our method in terms of calibration. More importantly, it negatively impacts accuracy, making confidence calibration meaningless, as effective calibration relies on a well-maintained accuracy level. This highlights our work as the first to propose adjusting weight decay as a calibration method, supported by theoretical analysis.

% {\rev \textbf{Global weight decay v.s. Final layer weight decay.} Since the previous work \citep{TS} observed that (global) weight decay affects model confidence, we would like to show the strength of the proposed method. 

% it is not a viable calibration tool, as reducing it often leads to a substantial drop in model performance. }

\begin{figure*}[t!]
	% \vspace{-0.35cm}
	%\centering
	\subfigure{
		\begin{minipage}[t]{0.5\linewidth} % 0.5
			\centering
			\includegraphics[scale=0.26]{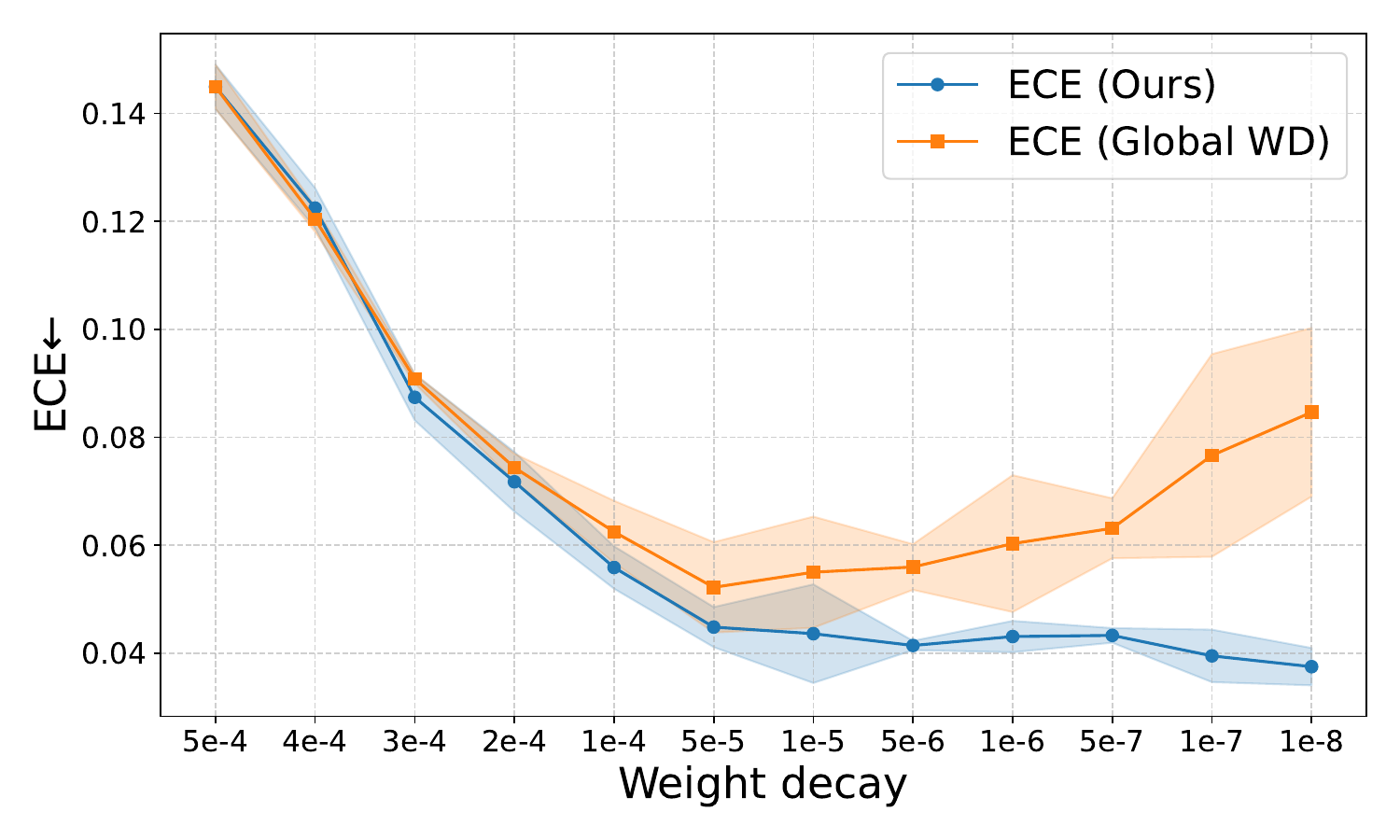} %0.32
			%\caption{fig1}
		\end{minipage}%
	}\subfigure{
		\begin{minipage}[t]{0.5\linewidth}
			%\centering
			\includegraphics[scale=0.26]{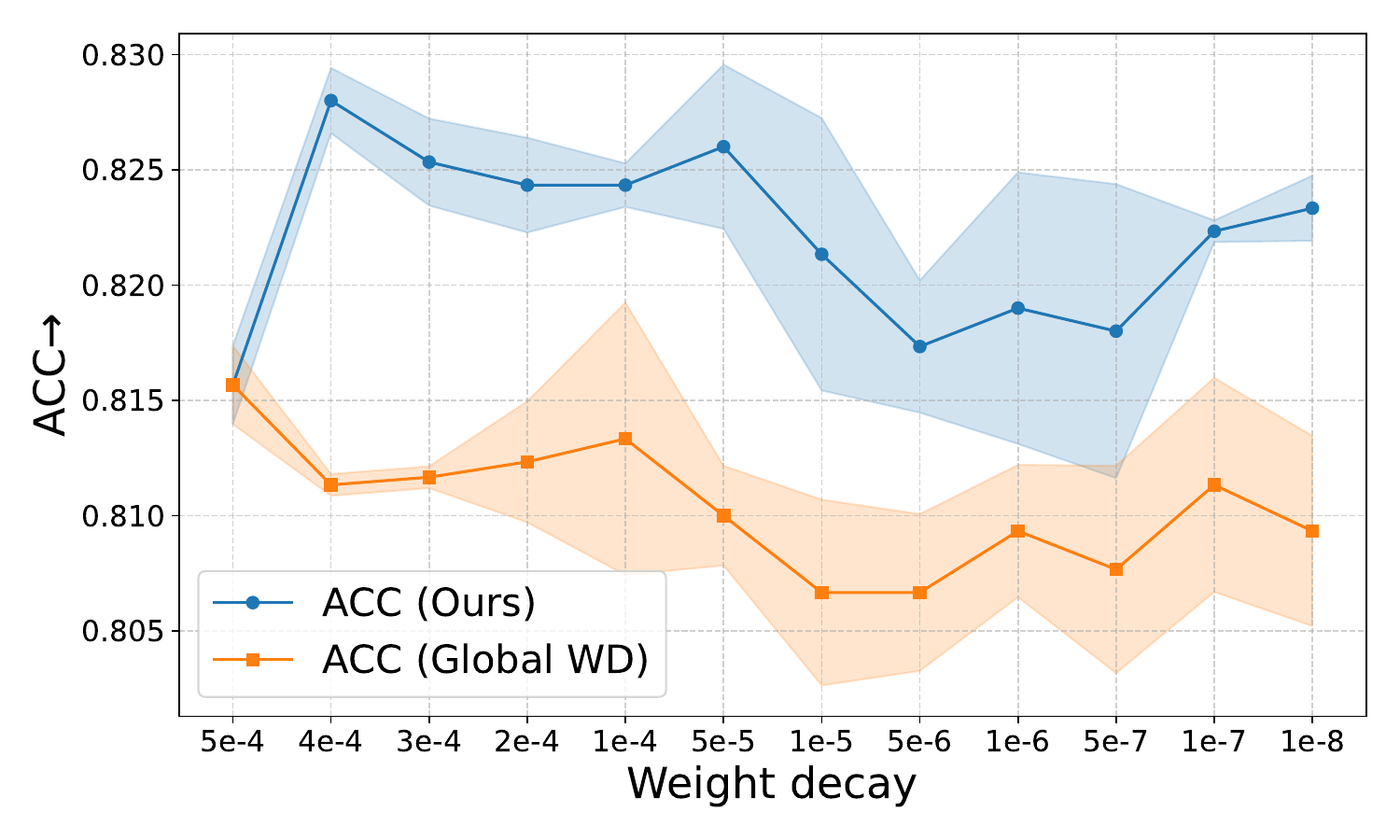}
			%\caption{fig2}
		\end{minipage}
    }
  \caption{Impact of global vs. final-layer weight decay on Cora dataset (ECE(a), accuracy (b)).} 
	\label{fig:wd_vs_global}
\end{figure*}

% The experimental verification is provided in Appendix \ref{ap:lambda}.

\textbf{Analyzing the effect of $\lambda^{(K)}$}. Since reducing the final-layer weight decay (\ie $\lambda^{(K)}$) directly affects the parameter $\mathbf{W}^{(K)}$, this parameter plays a crucial role in generating confidence scores. To better understand its effect, we conduct an in-depth analysis of $\mathbf{W}^{(K)}$, which not only reveals that a smaller $\lambda^{(K)}$ increases the distance between class centroids (\ie \textbf{class-centroid-level calibration}), but also highlights the necessity and lays the foundation for the design of subsequent node-level calibration. 
Specifically, we can analyze $\mathbf{W}^{(K)}$ by the following theorem (the proof is listed in Appendix \ref{ap:thm_closedfrom}):

\begin{theorem}[Closed-form solution for $\mathbf{W}^{(K)}$]
\label{thm:closed_from}
Given the objective function Eq. (\ref{eq:4}), the solution of $\mathbf{W}^{(K)}$ can be represent as:
\begin{equation}
\label{eq:closed}
(\mathbf{W}_{:,i}^{(K)})^{*} = \frac{1}{\lambda^{(K)}}(\sum\nolimits_{u:y_{u,i}=1}(1-s_{u,i})\mathbf{h}_{u}^{(K)} - \sum\nolimits_{v:y_{v,i}\ne 1} s_{v,i}\mathbf{h}_{v}^{(K)}).
\end{equation}
\end{theorem}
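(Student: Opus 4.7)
The plan is to obtain the closed form by a direct first-order optimality argument: since $\mathbf{W}^{(K)}$ appears in the objective only through the logits $\mathbf{z}_v = (\mathbf{W}^{(K)})^{\top}\mathbf{h}_v^{(K)}$ and through the quadratic regularizer, I would sum the per-node loss in Eq.~(\ref{eq:4}) over the training nodes, compute $\partial\mathcal{L}/\partial \mathbf{W}^{(K)}_{:,i}$, and set it to zero. The standard identity $\partial(-\sum_j y_{v,j}\log s_{v,j})/\partial z_{v,i} = s_{v,i}-y_{v,i}$ together with $\partial z_{v,i}/\partial\mathbf{W}^{(K)}_{:,i} = \mathbf{h}_v^{(K)}$ is the only fact needed for the cross-entropy part, and the derivative of the Frobenius regularizer contributes $\lambda^{(K)}\mathbf{W}^{(K)}_{:,i}$. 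All other terms in the $\sum_k \frac{\lambda^{(k)}}{2}\|\mathbf{W}^{(k)}\|_F^2$ sum vanish because they do not depend on $\mathbf{W}^{(K)}$.

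Combining these contributions, the stationarity condition reads
\begin{equation*}
\sum_{v}(s_{v,i}-y_{v,i})\,\mathbf{h}_v^{(K)} + \lambda^{(K)}\mathbf{W}^{(K)}_{:,i} = \mathbf{0}.
\end{equation*}
Solving for $\mathbf{W}^{(K)}_{:,i}$ and splitting the sum according to whether $y_{v,i}=1$ (in which case $y_{v,i}-s_{v,i} = 1-s_{v,i}$) or $y_{v,i}\neq 1$ (in which case $y_{v,i}-s_{v,i} = -s_{v,i}$) yields exactly Eq.~(\ref{eq:closed}). I would briefly note that strict convexity of the cross-entropy composed with the affine map $\mathbf{W}^{(K)}\mapsto \mathbf{z}$ plus the strongly convex regularizer $\tfrac{\lambda^{(K)}}{2}\|\mathbf{W}^{(K)}\|_F^2$ (for $\lambda^{(K)}>0$) ensures that this stationary point is the unique global minimum in $\mathbf{W}^{(K)}$ given the penultimate representations.

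The only conceptual subtlety I anticipate is that the right-hand side of Eq.~(\ref{eq:closed}) still depends on $\mathbf{W}^{(K)}$ through $s_{v,i}$ and through $\mathbf{h}_v^{(K)}$ (the latter coming from earlier layers), so the expression is a self-consistent fixed-point characterization rather than an explicit formula. I would therefore clarify, as is standard in such derivations, that $\mathbf{h}_v^{(K)}$ is treated as fixed (as a function of the preceding layers' parameters) when differentiating with respect to $\mathbf{W}^{(K)}$, so that the identity is to be read as a necessary condition satisfied at any stationary point of $\mathcal{L}$ in $\mathbf{W}^{(K)}$. This is exactly the viewpoint that makes the subsequent class-centroid interpretation meaningful: the optimal $\mathbf{W}^{(K)}_{:,i}$ is, up to the factor $1/\lambda^{(K)}$, a margin-weighted contrast between the representations of in-class and out-of-class nodes, which is why shrinking $\lambda^{(K)}$ pushes the effective class centroids (the columns of $\mathbf{W}^{(K)}$) farther apart.

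I do not foresee a hard technical obstacle; the main care required is bookkeeping of the sign conventions and of which index ($u$ vs.\ $v$) ranges over in-class vs.\ out-of-class nodes, so that the split of $\sum_v (y_{v,i}-s_{v,i})\mathbf{h}_v^{(K)}$ matches the form in Eq.~(\ref{eq:closed}) exactly.
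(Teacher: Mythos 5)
Your proposal matches the paper's own proof (Appendix~\ref{ap:thm_closedfrom}) essentially step for step: both compute $\partial\mathcal{L}/\partial \mathbf{W}^{(K)}_{:,i} = (s_{v,i}-y_{v,i})\mathbf{h}_v^{(K)} + \lambda^{(K)}\mathbf{W}^{(K)}_{:,i}$ with $\mathbf{h}_v^{(K)}$ held fixed, set the summed stationarity condition to zero, and split the sum over training nodes according to whether $y_{v,i}=1$. Your added remarks on uniqueness via strong convexity and on the self-consistent, fixed-point nature of the identity (since $s_{v,i}$ itself depends on $\mathbf{W}^{(K)}$) are caveats the paper leaves implicit but do not change the argument.
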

From Theorem \ref{thm:closed_from}, we can see that each column of $(\mathbf{W}^{(K)})^{*}$ corresponds to a specific class. For instance, $(\mathbf{W}^{(K)}_{:,i})^{*}$ corresponds to class $i$, where $i \in [1, \dots, c]$. Moreover, $(\mathbf{W}^{(K)}_{:,i})^{*}$ is the weighted sum of the final-layer node representations belonging to class $i$ in the training set, while subtracting the representations of nodes from other classes in the training set. Thus, every column of $(\mathbf{W}^{(K)})^{*}$ effectively represents the cluster centroids of each class in the training set. This also aligns with the deduction of the neural collapse theory \citep{neural_collapse}.

Building on the above observations, we can understand the effect of reducing the final-layer weight decay, according to the following proposition (the proof is provided in Appendix \ref{ap:thm2}):

\begin{proposition}
\label{thm:2}
Given two different coefficients $\lambda_{1},\lambda_{2}$ of weight decay on the final-layer parameters (\ie $\mathbf{W}^{(K)}$), for any $i,j \in [1,\dots, c]$, $i\ne j$, if $\lambda_{1} > \lambda_{2}$, the following equation holds:
\begin{equation}
||((\mathbf{W}^{(K)}_{:,i})^{*}| \lambda_{1})-((\mathbf{W}^{(K)}_{:,j})^{*}|\lambda_{1})||_{2}^{2} < ||((\mathbf{W}^{(K)}_{:,i})^{*}| \lambda_{2})-((\mathbf{W}^{(K)}_{:,j})^{*}| \lambda_{2})||_{2}^{2},
\end{equation}
where $((\mathbf{W}^{(K)})^{*}| \lambda_{1})$ and $((\mathbf{W}^{(K)})^{*}| \lambda_{2})$ denote the optimized parameters under weight decay coefficients $\lambda_{1}$ and $\lambda_{2}$, respectively.
\end{proposition}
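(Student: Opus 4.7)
The plan is to start from the closed-form expression supplied by Theorem~\ref{thm:closed_from}, which already makes the dependence of the final-layer columns on the weight-decay coefficient explicit. Defining the data-dependent vector
$$\mathbf{g}_i(\lambda) = \sum_{u:y_{u,i}=1}(1-s_{u,i})\mathbf{h}_{u}^{(K)} - \sum_{v:y_{v,i}\ne 1} s_{v,i}\mathbf{h}_{v}^{(K)},$$
Theorem~\ref{thm:closed_from} gives $(\mathbf{W}_{:,i}^{(K)})^{*} = \mathbf{g}_i(\lambda)/\lambda$, so that
$$\bigl\|(\mathbf{W}_{:,i}^{(K)})^{*}-(\mathbf{W}_{:,j}^{(K)})^{*}\bigr\|_2^2 = \frac{1}{\lambda^2}\,\bigl\|\mathbf{g}_i(\lambda)-\mathbf{g}_j(\lambda)\bigr\|_2^2.$$
The proposition then reduces to comparing this quantity at $\lambda_1$ and $\lambda_2$.

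If the hidden representations $\mathbf{h}_u^{(K)}$ and the softmax probabilities $s_{u,i}$ are treated as approximately invariant in $\lambda$, then $\mathbf{g}_i(\lambda)\approx\mathbf{g}_i$ is effectively a constant vector and the inequality follows immediately from $\lambda_1^{-2} < \lambda_2^{-2}$. I would present this as the main content of the proof, because (i) the representations $\mathbf{h}_u^{(K)}$ are produced by the upstream layers and are largely determined by the data rather than by the final-layer regularizer, and (ii) the remark after Theorem~\ref{thm:closed_from} already invokes neural collapse, under which every $\mathbf{g}_i$ is close to the corresponding training-class centroid, so inter-class differences $\mathbf{g}_i-\mathbf{g}_j$ are stable across reasonable values of $\lambda$.

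The main obstacle is making the previous paragraph rigorous rather than heuristic, since $s_{u,i}$ and $\mathbf{h}_u^{(K)}$ do in principle depend on $\lambda$ through the joint optimum. I would address this by noting that shrinking $\lambda$ increases the logit magnitudes $(\mathbf{W}^{(K)})^{\top}\mathbf{h}_u^{(K)}$, which pushes $s_{u,i}$ on correctly classified training nodes closer to $1$ and $s_{v,i}$ on the remaining classes closer to $0$. Consequently $\|\mathbf{g}_i(\lambda)-\mathbf{g}_j(\lambda)\|_2$ is non-decreasing as $\lambda$ decreases, so the implicit $\lambda$-dependence reinforces rather than cancels the explicit $1/\lambda^2$ scaling in the display above, yielding the required strict inequality in Proposition~\ref{thm:2}.
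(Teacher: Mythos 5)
Your proof takes essentially the same route as the paper's: both substitute the closed form from Theorem~\ref{thm:closed_from}, factor the squared inter-centroid distance as $\lambda^{-2}$ times a quantity treated as independent of $\lambda$, and conclude directly from $\lambda_1^{-2} < \lambda_2^{-2}$. If anything you are slightly more careful than the paper, which writes the column difference as if it equalled the single vector $\mathbf{g}_i$ rather than $\mathbf{g}_i-\mathbf{g}_j$ and silently assumes that $s_{u,i}$ and $\mathbf{h}_u^{(K)}$ do not vary with $\lambda$ --- the very dependence you flag explicitly and argue (heuristically) reinforces rather than cancels the $1/\lambda^2$ scaling.
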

Proposition \ref{thm:2} indicates that \textbf{reducing the weight decay of $\mathbf{W}^{(K)}$ increases the distances between the class centroids of training nodes}, as illustrated in Figure \ref{fig:main} (a), thereby making the class centroids more clearly separable and ensuring that samples are more likely to fall into high-confidence regions.
%{\cm This effect is especially important for GNNs, whose smoothing mechanism naturally pulls whole node representations closer together in a graph \citep{oversmooth} and thereby decreases inter-class distance to some extent; thus, reducing the final-layer weight decay directly counteracts this tendency and provides a GNN-specific remedy for under-confidence.}

%{\rev The above method performs calibration at the class-centroid level by controlling the distances between class centers. Furthermore, the confidence of each test node ultimately depends on its own representations. Hence, node-level calibration serves as a fine-grained complement to the class-centroid-level calibration, enabling more precise control over individual prediction confidence.}

\subsection{Node-Level Calibration}

The above method performs calibration at the class-centroid level by controlling the distances between class centroids. Since the confidence of each test node is also closely related to its own representation, it is necessary to adjust each node's representation as a fine-grained complement to the class-centroid-level calibration, enabling more precise control over individual prediction confidence. Fortunately, based on the above theoretical analysis, we can easily derive the relationship between a test node’s representation and its confidence. Specifically, given the closed-form solution of the final-layer parameters (\ie $(\mathbf{W}^{(K)})^{*}$) in Eq. (\ref{eq:closed}), we can calculate the output logit for arbitrary test node $t$:
\begin{equation}
\label{eq:8}
\begin{aligned}
z_{t,i} =& ((\mathbf{W}^{(K)}_{:,i})^{*})^{\top}\mathbf{h}_{t}^{(K)} = \frac{1}{\lambda^{(K)}}(\underbrace{\sum_{u:y_{u,i}=1}(1-s_{u,i})(\mathbf{h}_{u}^{(K)})^{\top}\mathbf{h}_{t}^{(K)}}_{\text{\color{blue} \fontsize{10}{16}\selectfont intra-class similarity}} - \underbrace{\sum_{v:y_{v,i}\ne 1} s_{v,i}(\mathbf{h}_{v}^{(K)})^{\top}\mathbf{h}_{t}^{(K)} }_{\text{\color{green} \fontsize{10}{16}\selectfont inter-class similarity}}),
\end{aligned}
\end{equation}
where $\mathbf{h}_{t}^{(K)}$ is the final-layer aggregated representation of $\mathbf{x}_{t}$. In Eq. (\ref{eq:8}),  the first factor $\frac{1}{\lambda^{(K)}}$ has already been analyzed in the previous section. Therefore, we now focus on the second factor. This factor reveals that for an arbitrary test node $t$, the output logits for class $i$ (\ie $z_{t,i}$) are composed of two terms (\ie intra-class similarity and inter-class similarity). The first term calculates the similarity between the $i$-th class centroid (\ie $\sum_{u:y_{u,i}=1}(1-s_{u,i})(\mathbf{h}_{u}^{(K)})^{\top}$) and the test node $t$ (\ie $\mathbf{h}_{t}^{(K)}$). The second term subtracts the similarity values between the test node 
$t$ and the other class centers, excluding the centroid of class $i$ (\ie $\sum_{v:y_{v,i}\ne 1} s_{v,i}(\mathbf{h}_{v}^{(K)})^{\top}$). Then we can conclude that in the final layer if the representation of a test node $t$ is closer to the $i$-th class centroid and farther from other class centroids, the $i$-th element of the logits (\ie $z_{t,i}$) will obtain a large positive value; otherwise, it will obtain a large negative value. Obviously, when the logit corresponding to the predicted class is a very large positive value and the logits for the other classes are very large negative values, the softmax function produces a high-confidence prediction. This inspires us that \textbf{we can increase the confidence of GNN by shortening the distance between the test node representation and the corresponding classified class centroid of the training set in the final layer.}

Based on the above analysis, we design node-level calibration, a \textbf{post-hoc} and \textbf{training-free} method. Its schematic is provided in Appendix~\ref{ap:NLC}. After training the GNN is completed, we can adjust the final-layer representation of the test node $t$ (\ie $\mathbf{h}_{t}^{(K)}$) using the following equation:
%(See schematic in Appendix \ref{ap:NLC}):
\begin{equation}\label{eq:9.5}
\mathbf{h}_{t}^{(K)} = \alpha \mathbf{W}_{:,i}^{(K)} + (1-\alpha)\mathbf{h}_{t}^{(K)},~~s.t.~ \hat{\mathbf{y}}_{t,i}=1
\end{equation}
where $\alpha \in [0,1]$ is a hyperparameter to control the similarity between node representation and its predicted class centroid, and $\hat{y}_{t}$ is the predicted label of node $t$. As we analyzed above, each column of $\mathbf{W}^{(K)}$ represents each class centroid. Therefore, Eq. (\ref{eq:9.5}) encourages each test node to move closer to its predicted class centroid. Meanwhile, the class-centroid-level calibration increases the distance between class centroids, which in turn pushes the test node $t$ farther away from other class centroids. This is fully consistent with the analysis of Eq. (\ref{eq:8}).

\noindent\textbf{Remark.} Eq. (\ref{eq:8}) provides a unified formulation that reveals model confidence is jointly determined by two multiplicative factors. As shown in Figure \ref{fig:main}(a), the first factor corresponds to class-centroid-level calibration, which enhances global class separability; the second factor corresponds to node-level calibration, which adjusts each node’s representation relative to its predicted class centroid. This factorized form reflects the complementary nature of the two calibration strategies and highlights the completeness of our proposed method.

\noindent\textbf{Analyzing GNN Bias in Node-Level Calibration.} In the proposed node-level calibration. the class-centroid (\ie $\mathbf{W}^{(K)}$) is calculated by the training nodes. Therefore, the distance between the test node and the training node is a key factor for the node-level calibration. However, within the GNN framework, the distance from the test node to the training node is biased. Specifically, test nodes that are closer to training nodes in graph structure tend to obtain more similar representations to them through the GNN’s message-passing mechanism \citep{GATS,cai2020note,rusch2023survey}. Therefore, we further refine the proposed node-level calibration to mitigate the bias:
\begin{equation}\label{eq:10}
\left\{\begin{matrix}
\mathbf{h}_{f}^{(K)} = \alpha \mathbf{W}_{:,i}^{(K)} + (1-\alpha)\mathbf{h}_{f}^{(K)},~~s.t.~ \hat{\mathbf{y}}_{f,i}=1, f\in V_{F}
 \\
\mathbf{h}_{s}^{(K)} = \beta \mathbf{W}_{:,i}^{(K)} + (1-\beta)\mathbf{h}_{s}^{(K)},~~s.t.~ \hat{\mathbf{y}}_{s,i}=1,s\in V_{S}
\end{matrix},\right.
\end{equation}
where $V_{F}$ and $V_{S}$ are first-order neighbors of the training node and high-order neighbors of the training node, respectively, $\alpha,\beta \in [0,1]$ $,\beta > \alpha$ are hyper-parameters to control the similarity between node representation and class centroid, and $\hat{y}_{v}$ is the predicted label of node $v$. 

In Eq. (\ref{eq:10}), we implement two levels of instance-wise tuning. First, the representation of every test node will be close to the centroid of its predicted class and pushed away from other' class centroids. 
% Specifically, suppose a test node $v$ be classified as $i$-th class, then the $\mathbf{h}_{v}^{(K)}$ 
% is encouraged to approach its predicted class centroid $\mathbf{W}_{:,i}^{(K)}$. 
Second, nodes that are not adjacent to the training set will receive greater weights to move closer to their corresponding class centroids compared to nodes that are adjacent to the training set.

\section{Experiments}
\label{sec:exp}
In this section, we conduct experiments on four public datasets to evaluate the proposed method in terms of different settings
%\footnote{ The code is released at \url{https://anonymous.4open.science/r/SCAR-7055}}. 
Details of experiments are shown in Appendix \ref{ap:exp_set}, and additional experiments are shown in Appendix \ref{ap:add_exp}. 
%The code is released at \hyperlink{https://anonymous.4open.science/r/SCAR-7055}{\color{magenta}Anonymous code link}.

\subsection{Experimental Setup}
\noindent\textbf{Datasets.} 
Following \citep{CaGCN}, we evaluate on four commonly used citation networks: Cora \citep{citation}, Citeseer \citep{citation}, Pubmed \citep{citation}, and CoraFull \citep{corafull}. Results on more diverse datasets are provided in Appendix~\ref{ap:hetero}.

\noindent\textbf{GNNs to be calibrated.}
In our experiments, we use the two most classic GNNs (\ie GCN \citep{gcn_kipf} and GAT \citep{gat}) as backbones to be calibrated. For GCN and GAT, we follow parameters suggested by \citep{gcn_kipf} and \citep{gat}. Additional results on more base models are presented in Appendix \ref{ap:base model}.

\noindent\textbf{Comparison Methods.}
The comparison methods include two commonly used post-hoc calibration methods in general neural networks (\ie TS \citep{TS} and MS \citep{MS}), one regularization method specially designed for GNNs (\ie AU-LS \citep{AULS}), and three SOTA post-hoc methods specially designed for GNNs (\ie CaGCN \citep{CaGCN}, GATS \citep{GATS}, DCGC \citep{DGC}).

\noindent\textbf{Evaluation Protocol.}
We follow the evaluation in previous works \citep{CaGCN}, adopting Expected Calibration Error (ECE) with 20 bins as
the metric for calibration. Besides, since fine-tuning the hyper-parameter of the weight decay in the final layer, the predicted
label may have slight changes, we also adopt classification accuracy as an evaluation metric. We report the average and
standard deviation of 10 runs for each pair split of a dataset.
\begin{table*}[htbp]
\centering
\caption{ECE (\%) with 20 bins on different models for citation networks, considering various numbers of labels per class (L/C). ``-'' denotes the result is not meaningful. \emph{Uncal.} represents the uncalibrated model. The best results are highlighted in black.}
\label{tb:GCN}

\resizebox{1.0\columnwidth}{!}{
\begin{tabular}{cccccccccc}
    \toprule
    \multirow{2}*{\textbf{Datasets} }& \multirow{2}*{\textbf{L/C} }& \multicolumn{8}{c}{GCN} \\
    \cmidrule{3-10}
     & & \textbf{Uncal.} & \textbf{TS} & \textbf{MS} & \textbf{CaGCN} &
    \textbf{GATS} & \textbf{AU-LS} &
    \textbf{DCGC} & \textbf{SCAR} \\
    \midrule
    \multirow{3}*{Cora} & 20 & 13.47\scriptsize $\pm$0.63  & 4.88\scriptsize $\pm$0.55 & 4.14\scriptsize $\pm$0.57 & 4.01\scriptsize $\pm$0.67 &
    3.96\scriptsize $\pm$0.46 & 
    4.32\scriptsize $\pm$0.23 &
    3.91\scriptsize $\pm$0.36& \textbf{3.35\scriptsize $\pm$0.53} \\
    ~ & 40 & 11.34\scriptsize $\pm$0.47 & 4.17\scriptsize $\pm$0.72 & 3.72\scriptsize $\pm$0.46 & 4.07\scriptsize $\pm$0.54 &  3.82\scriptsize $\pm$0.74 & 3.69\scriptsize $\pm$0.38 & 3.53\scriptsize $\pm$0.65 & \textbf{2.67\scriptsize $\pm$0.33}\\
    ~ & 60 & 9.37\scriptsize $\pm$0.49 & 3.55\scriptsize $\pm$0.54 &3.64\scriptsize $\pm$0.61 & 3.74\scriptsize $\pm$0.44 & 3.56\scriptsize $\pm$0.36 & 4.09\scriptsize $\pm$0.81 & 3.42\scriptsize $\pm$0.66&  \textbf{2.60\scriptsize $\pm$0.62}\\
    
  \midrule
  \multirow{3}*{Citeseer} & 20 & 12.48\scriptsize $\pm$0.71  & 6.41\scriptsize $\pm$0.87 & 6.44\scriptsize $\pm$0.37 & 5.95\scriptsize $\pm$0.72 & 7.26\scriptsize $\pm$0.63 & 6.69\scriptsize $\pm$1.76 &
  6.24\scriptsize $\pm$0.41& \textbf{3.43\scriptsize $\pm$0.58} \\
  ~ & 40 & 9.57\scriptsize $\pm$0.77 & 6.01\scriptsize $\pm$0.42 & 5.38\scriptsize $\pm$0.57 & 5.45\scriptsize $\pm$0.55 &  6.82\scriptsize $\pm$0.25 & 7.56\scriptsize $\pm$2.19 & 6.16\scriptsize $\pm$0.52 & \textbf{4.17\scriptsize $\pm$0.45}\\
  ~ & 60 & 8.06\scriptsize $\pm$0.64 & 5.59\scriptsize $\pm$0.50 & 5.21\scriptsize $\pm$0.64 & 5.46\scriptsize $\pm$0.34 & 8.11\scriptsize $\pm$0.37 & 5.74\scriptsize $\pm$0.99 & 7.43\scriptsize $\pm$0.61&  \textbf{4.72\scriptsize $\pm$0.69}\\
\midrule
  \multirow{3}*{Pubmed} & 20 & 5.86\scriptsize $\pm$0.77  & 5.41\scriptsize $\pm$0.38 & 4.76\scriptsize $\pm$0.42 & 4.05\scriptsize $\pm$0.60 & 4.01\scriptsize $\pm$0.13 &
  6.48\scriptsize $\pm$1.35 &
  3.95\scriptsize $\pm$0.21& \textbf{3.81\scriptsize $\pm$0.47} \\
  ~ & 40 & 4.44\scriptsize $\pm$0.55 & 4.46\scriptsize $\pm$0.63 & 4.36\scriptsize $\pm$0.63 & 4.02\scriptsize $\pm$0.40 &  3.89\scriptsize $\pm$0.52 & 5.16\scriptsize $\pm$1.25 & 3.65\scriptsize $\pm$0.44 & \textbf{3.16\scriptsize $\pm$0.52}\\
  ~ & 60 & 4.45\scriptsize $\pm$0.97 & 3.67\scriptsize $\pm$0.60 & 3.18\scriptsize $\pm$0.64 & 3.11\scriptsize $\pm$0.48 & 3.23\scriptsize $\pm$0.43 & 5.09\scriptsize $\pm$0.68 & 3.13\scriptsize $\pm$0.14&  \textbf{2.67\scriptsize $\pm$0.63}\\

\midrule
  \multirow{3}*{CoraFull} & 20 & 19.86\scriptsize $\pm$0.61  & 10.31\scriptsize $\pm$0.61 & - & 7.76\scriptsize $\pm$0.64 & 7.39\scriptsize $\pm$0.75 & 
  7.37\scriptsize $\pm$0.89 &
  7.36\scriptsize $\pm$0.84& \textbf{6.96\scriptsize $\pm$0.48} \\
  ~ & 40 & 23.21\scriptsize $\pm$0.54 & 11.17\scriptsize $\pm$0.65 & - & 7.01\scriptsize $\pm$0.39 &  6.97\scriptsize $\pm$0.40 & 5.72\scriptsize $\pm$0.67 & 6.84\scriptsize $\pm$0.52 & \textbf{5.61\scriptsize $\pm$0.43}\\
  ~ & 60 & 23.37\scriptsize $\pm$0.40 & 9.81\scriptsize $\pm$0.38 & - & 7.68\scriptsize $\pm$0.34 & 6.65\scriptsize $\pm$1.67 & 7.28\scriptsize $\pm$0.96 & 6.56\scriptsize $\pm$0.96&  \textbf{6.31\scriptsize $\pm$0.69}\\
    
    \bottomrule
\end{tabular}
 }
\end{table*}
\vspace{-0.5cm}

\subsection{Effectiveness Analysis}
We first evaluate the effectiveness of the proposed method on the four citation datasets under different label rates. We are reporting the calibration results evaluated by ECE in Table \ref{tb:GCN} and Table \ref{tb:GAT}. Obviously, our method achieves the best performance in calibration tasks. More surprisingly, the overall accuracy of our method has also improved. The specific results are in Appendix \ref{ap:acc}. In addition, experiments on heterophilic and large-scale graphs are provided in Appendix \ref{ap:hetero}, while results on more base models are reported in Appendix \ref{ap:base model}. 

First, compared with the traditional calibration methods (\ie TS and MS), the proposed SCAR always outperforms them by large margins. For example,  the proposed SCAR on average improves by 4.56\%, compared to the best traditional calibration method (\ie TS, MS behaves badly on datasets with many classes, e.g., CoraFull) based on GCN and GAT. This demonstrates that calibration on graphs is more difficult and requires specialized methods.

Second, compared to graph calibration methods, the proposed SCAR achieves the best results, followed by CaGCN, GATS, AU-LS, and DCGC. For example, our method on average improves by 1.5\%, compared to the best graph calibration method DCGC, across all datasets and label rates, whether the base model is GCN or GAT. This can be attributed to the fact that the proposed SCAR calibrates the confidence of GNNs from the model itself, enabling the calibration process to better align with the model's inherent representation and learning dynamics.

\begin{table*}[t]
\centering
\caption{ECE (\%) with 20 bins on different models for citation networks, considering various numbers of labels per class (L/C). ``-'' denotes the result is not meaningful. \emph{Uncal.} represents the uncalibrated model. The best results are highlighted in black.}
\label{tb:GAT}

\resizebox{1.0\columnwidth}{!}{
\begin{tabular}{ccccccccccc}
    \toprule
    \multirow{2}*{\textbf{Datasets} }& \multirow{2}*{\textbf{L/C} }& \multicolumn{8}{c}{GAT} \\
    \cmidrule{3-10}
     & & \textbf{Uncal.} & \textbf{TS} & \textbf{MS} & \textbf{CaGCN} &
    \textbf{GATS} & \textbf{AU-LS} &
    \textbf{DCGC} & \textbf{SCAR} \\
    \midrule
    \multirow{3}*{Cora} & 20 & 15.58\scriptsize $\pm$0.89  & 7.17\scriptsize $\pm$0.98 & 5.44\scriptsize $\pm$0.94 & 4.50\scriptsize $\pm$0.56 & 4.25\scriptsize $\pm$0.29 & 5.51\scriptsize $\pm$0.83 &
    4.10\scriptsize $\pm$0.53& \textbf{3.52\scriptsize $\pm$0.74} \\
    ~ & 40 & 13.40\scriptsize $\pm$0.54 & 4.85\scriptsize $\pm$0.77 & 4.91\scriptsize $\pm$0.60 & 3.65\scriptsize $\pm$0.56 & 3.68\scriptsize $\pm$0.67 & 3.79\scriptsize $\pm$0.58 & 4.28\scriptsize $\pm$0.42 & \textbf{3.52\scriptsize $\pm$0.74}\\
    ~ & 60 & 12.01\scriptsize $\pm$0.33 & 3.93\scriptsize $\pm$0.61 & 4.11\scriptsize $\pm$0.53 & 3.13\scriptsize $\pm$0.32 & 2.76\scriptsize $\pm$0.31 & 3.44\scriptsize $\pm$0.95 & 2.64\scriptsize $\pm$0.63&  \textbf{2.59\scriptsize $\pm$0.43}\\
    
  \midrule
  \multirow{3}*{Citeseer} & 20 & 15.34\scriptsize $\pm$0.50  & 9.16\scriptsize $\pm$0.87 & 6.33\scriptsize $\pm$0.98 & 5.72\scriptsize $\pm$0.68 & 6.01\scriptsize $\pm$0.65 & 
  8.12\scriptsize $\pm$0.13 &
  5.88\scriptsize $\pm$0.15& \textbf{4.37\scriptsize $\pm$0.83} \\
  ~ & 40 & 12.52\scriptsize $\pm$0.87 & 7.97\scriptsize $\pm$0.31 & 5.90\scriptsize $\pm$0.54 & 5.32\scriptsize $\pm$0.54 & 6.19\scriptsize $\pm$0.85 & 7.24\scriptsize $\pm$0.87 & 5.26\scriptsize $\pm$0.48 & \textbf{3.54\scriptsize $\pm$0.65}\\
  ~ & 60 & 10.90\scriptsize $\pm$0.59 & 6.48\scriptsize $\pm$0.71 & 5.19\scriptsize $\pm$0.91 & 5.25\scriptsize $\pm$0.76 & 5.48\scriptsize $\pm$0.53 & 6.47\scriptsize $\pm$0.60 & 5.44\scriptsize $\pm$0.36&  \textbf{4.24\scriptsize $\pm$0.35}\\
\midrule
  \multirow{3}*{Pubmed} & 20 & 8.35\scriptsize $\pm$0.31  & 6.56\scriptsize $\pm$0.46 & 5.01\scriptsize $\pm$0.37 & 3.56\scriptsize $\pm$0.63 & 
  4.68\scriptsize $\pm$1.68 &
  3.58\scriptsize $\pm$0.47 & \textbf{3.52\scriptsize $\pm$0.21} & 3.78\scriptsize $\pm$0.84 \\
  ~ & 40 & 8.69\scriptsize $\pm$0.46 & 6.58\scriptsize $\pm$0.65 & 5.39\scriptsize $\pm$0.60 & 3.08\scriptsize $\pm$0.54 & 3.60\scriptsize $\pm$0.26 & 4.26\scriptsize $\pm$0.81 & 3.54\scriptsize $\pm$0.58 & \textbf{3.02\scriptsize $\pm$0.30}\\
  ~ & 60 & 9.93\scriptsize $\pm$0.41 & 6.69\scriptsize $\pm$0.63 & 5.39\scriptsize $\pm$0.60 & 3.08\scriptsize $\pm$0.54 & 2.94\scriptsize $\pm$0.72 & 3.97\scriptsize $\pm$0.52 & 2.90\scriptsize $\pm$0.98 &  \textbf{2.80\scriptsize $\pm$0.85}\\

\midrule
  \multirow{3}*{CoraFull} & 20 & 21.19\scriptsize $\pm$0.36  & 11.01\scriptsize $\pm$0.51 & - & 7.88\scriptsize $\pm$0.60 & 6.94\scriptsize $\pm$0.53 & 
  8.89\scriptsize $\pm$0.59 &
  6.64\scriptsize $\pm$0.63& \textbf{6.41\scriptsize $\pm$0.63} \\
  ~ & 40 & 24.38\scriptsize $\pm$0.42 & 11.33\scriptsize $\pm$0.83 & - & 7.38\scriptsize $\pm$0.60 &  6.19\scriptsize $\pm$0.45 & 8.15\scriptsize $\pm$0.67 & 5.84\scriptsize $\pm$0.52 & \textbf{5.52\scriptsize $\pm$0.46}\\
  ~ & 60 & 24.97\scriptsize $\pm$0.18 & 11.33\scriptsize $\pm$0.52 & - & 8.49\scriptsize $\pm$0.69 & 5.58\scriptsize $\pm$0.66 & 8.36\scriptsize $\pm$0.63 & 5.34\scriptsize $\pm$0.63&  \textbf{4.75\scriptsize $\pm$0.64}\\
    
    \bottomrule
\end{tabular}
 }
\end{table*}

\subsection{Ablation Study}
The proposed SCAR consists of two components: class-centroid-level calibration ( \textbf{+ CLC} for short) and node-level calibration (\textbf{+ NLC} for short). To verify the effectiveness of each component in the proposed method, we evaluate the ECE of all variants using GCN and GAT backbones across all datasets with L/C = 20. The results are reported in Table \ref{tb:ablation}.

According to Table \ref{tb:ablation}, we have the following conclusions. First, our method with the complete components achieves the best performance. For example, our method on average improves by 0.8\%, compared to the best variant (\ie \textbf{+ NLC}), indicating that all the components are necessary for our method. This is consistent with our analysis in Section \ref{sec:method}. Those are the two proposed components that promote each other and complement each other. Second, our method significantly outperforms traditional TS in terms of any individual component's performance. For example, the individual \textbf{CLC} and \textbf{NLC} on average improve by 2.4\% and 2.5\% compared with TS. This indicates that the proposed method not only enhances the overall performance but also strengthens the contribution of each component. More importantly, previous post-hoc graph calibration methods were based on TS and needed to employ an extra calibration model to learn the appropriate temperature coefficients for every test node. The proposed NLC, like TS, is also a training-free post-hoc method, but its superior performance highlights its potential as a stronger foundation for subsequent post-hoc approaches.
\vspace{-1.5em}
\begin{table*}[htbp]
\centering
\caption{ECE (\%) of each component on all datasets with L/C=20. The best results are highlighted in black.}
\label{tb:ablation}

\resizebox{1.0\columnwidth}{!}{
\begin{tabular}{cccccccccc}
    \toprule
    \multirow{2}*{\textbf{+CLC} }& \multirow{2}*{\textbf{+NLC} }& \multicolumn{4}{c}{GCN} & \multicolumn{4}{c}{GAT}\\
    \cmidrule(r){3-6} \cmidrule(l){7-10}
     & & \textbf{Cora} & \textbf{Citeseer} & \textbf{Pubmed} & \textbf{CoraFull} & \textbf{Cora} & \textbf{Citeseer} & \textbf{Pubmed} & \textbf{CoraFull} \\
    \midrule
        \multicolumn{2}{c}{TS} & 4.88\scriptsize $\pm$0.55 & 6.41\scriptsize $\pm$0.87 & 5.41\scriptsize $\pm$0.38 & 10.13\scriptsize $\pm$0.61 & 7.17\scriptsize $\pm$0.98  & 9.16\scriptsize $\pm$0.87 &  
    6.56\scriptsize $\pm$0.46 &
    11.01\scriptsize $\pm$0.51\\
    \midrule
    $\checkmark$ & ~  & 3.72\scriptsize $\pm$0.53  & 4.92\scriptsize $\pm$0.79 & 4.01\scriptsize $\pm$0.47 & 7.09\scriptsize $\pm$0.52 & 3.64\scriptsize $\pm$0.63 & 5.98\scriptsize $\pm$0.62& 
    3.99\scriptsize $\pm$0.53&
    6.89\scriptsize $\pm$0.83\\
    ~ & $\checkmark$ & 4.08\scriptsize $\pm$0.62 & 3.63\scriptsize $\pm$0.63 & 3.88\scriptsize $\pm$0.44 & 9.23\scriptsize $\pm$0.55 &  4.18\scriptsize $\pm$0.72 & 4.96\scriptsize $\pm$0.66 & 4.13\scriptsize $\pm$0.42 &
    7.12\scriptsize $\pm$0.45 \\
    $\checkmark$ & $\checkmark$ & \textbf{3.35\scriptsize $\pm$0.65} & \textbf{3.43\scriptsize $\pm$0.58} & \textbf{3.78\scriptsize $\pm$0.53} & \textbf{6.41\scriptsize $\pm$0.63} & \textbf{3.52\scriptsize $\pm$0.74} & \textbf{4.37\scriptsize $\pm$0.83} &  \textbf{3.78\scriptsize $\pm$0.84} &
    \textbf{6.41\scriptsize $\pm$0.63} \\
    
    \bottomrule
\end{tabular}
 }
\end{table*}

\begin{figure*}[t]
	% \vspace{-0.35cm}
	%\centering
	\subfigure{
		\begin{minipage}[t]{0.25\linewidth} % 0.5
			\centering
			\includegraphics[scale=0.25]{GCN_cora_20_GCN.pdf} %0.32
			%\caption{fig1}
		\end{minipage}%
	}\subfigure{
		\begin{minipage}[t]{0.25\linewidth}
			%\centering
			\includegraphics[scale=0.25]{GCN_citeseer_20_GCN.pdf}
			%\caption{fig2}
		\end{minipage}
    }\subfigure{
		\begin{minipage}[t]{0.25\linewidth}
			%\centering
			\includegraphics[scale=0.25]{GCN_pubmed_20_GCN.pdf}
			%\caption{fig2}
		\end{minipage}
    }\subfigure{
		\begin{minipage}[t]{0.25\linewidth}
			%\centering
			\includegraphics[scale=0.25]{GCN_cora_full_20_GCN.pdf}
			%\caption{fig2}
		\end{minipage}
    }  

    \subfigure{
		\begin{minipage}[t]{0.25\linewidth} % 0.5
			\centering
			\includegraphics[scale=0.25]{SCAR_cora_20_GCN.pdf} %0.32
			%\caption{fig1}
		\end{minipage}%
	}\subfigure{
		\begin{minipage}[t]{0.25\linewidth}
			%\centering
			\includegraphics[scale=0.25]{SCAR_citeseer_20_GCN.pdf}
			%\caption{fig2}
		\end{minipage}
    }\subfigure{
		\begin{minipage}[t]{0.25\linewidth}
			%\centering
			\includegraphics[scale=0.25]{SCAR_pubmed_20_GCN.pdf}
			%\caption{fig2}
		\end{minipage}
    }\subfigure{
		\begin{minipage}[t]{0.25\linewidth}
			%\centering
			\includegraphics[scale=0.25]{SCAR_cora_full_20_GCN.pdf}
			%\caption{fig2}
		\end{minipage}
    }

	\subfigure{
		\begin{minipage}[t]{0.25\linewidth} % 0.5
			\centering
			\includegraphics[scale=0.25]{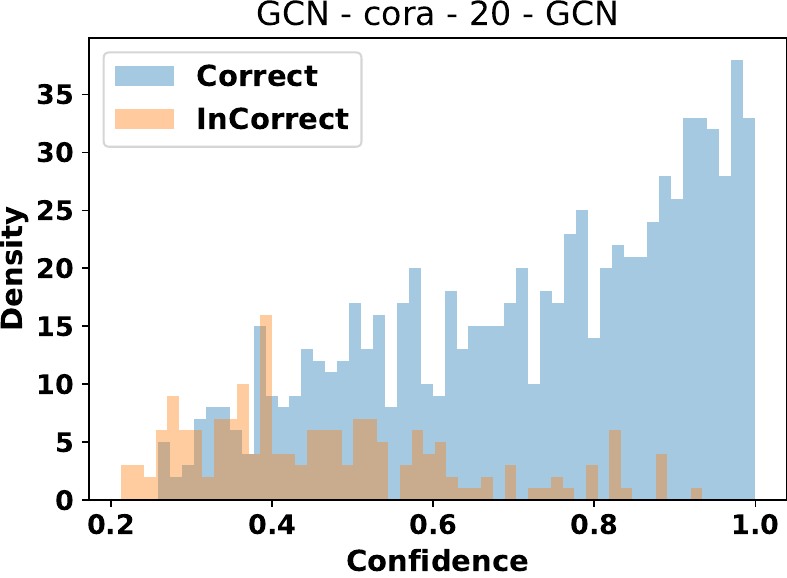} %0.32
			%\caption{fig1}
		\end{minipage}%
	}\subfigure{
		\begin{minipage}[t]{0.25\linewidth}
			%\centering
			\includegraphics[scale=0.25]{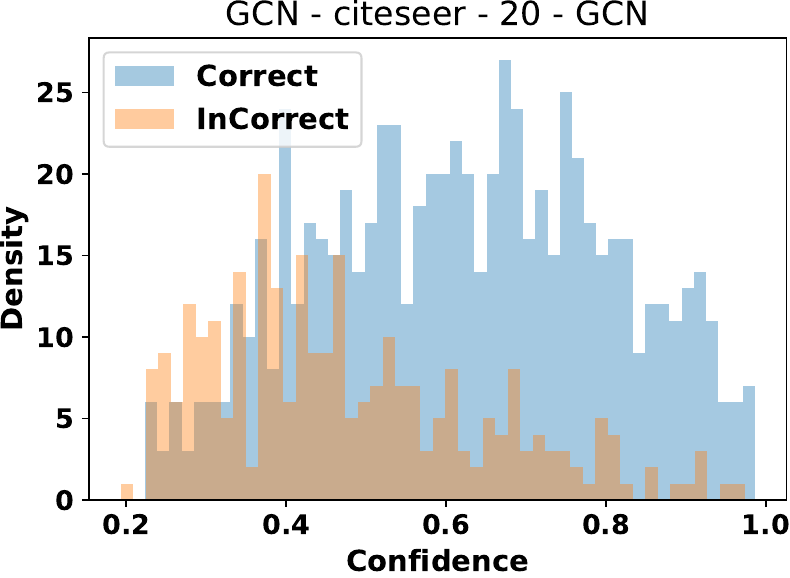}
			%\caption{fig2}
		\end{minipage}
    }\subfigure{
		\begin{minipage}[t]{0.25\linewidth}
			%\centering
			\includegraphics[scale=0.25]{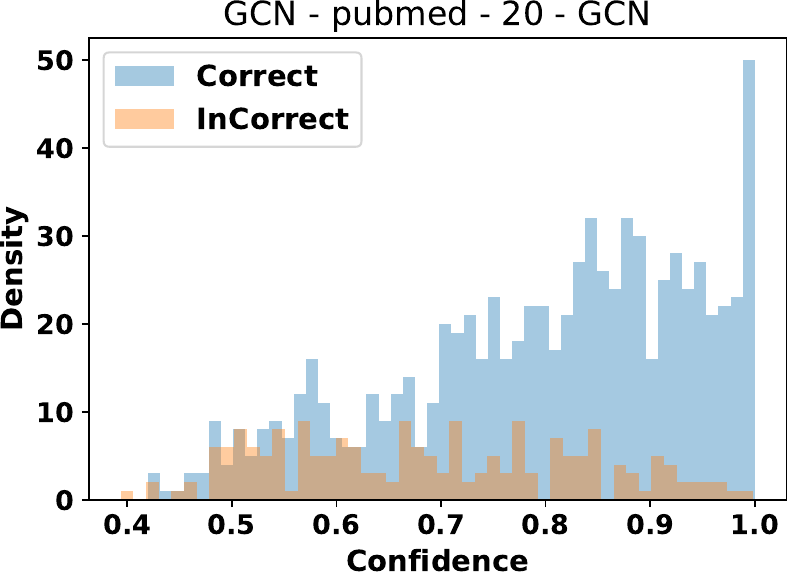}
			%\caption{fig2}
		\end{minipage}
    }\subfigure{
		\begin{minipage}[t]{0.25\linewidth}
			%\centering
			\includegraphics[scale=0.25]{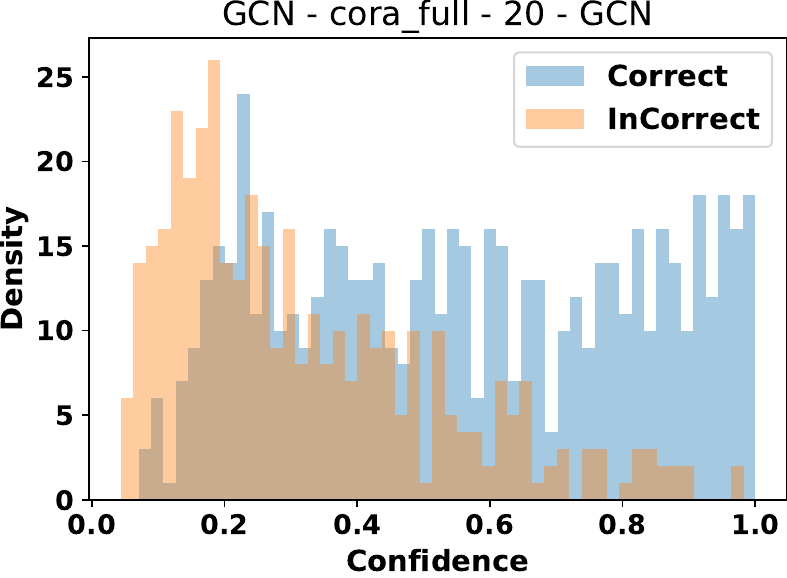}
			%\caption{fig2}
		\end{minipage}
    }  

    \subfigure{
		\begin{minipage}[t]{0.25\linewidth} % 0.5
			\centering
			\includegraphics[scale=0.25]{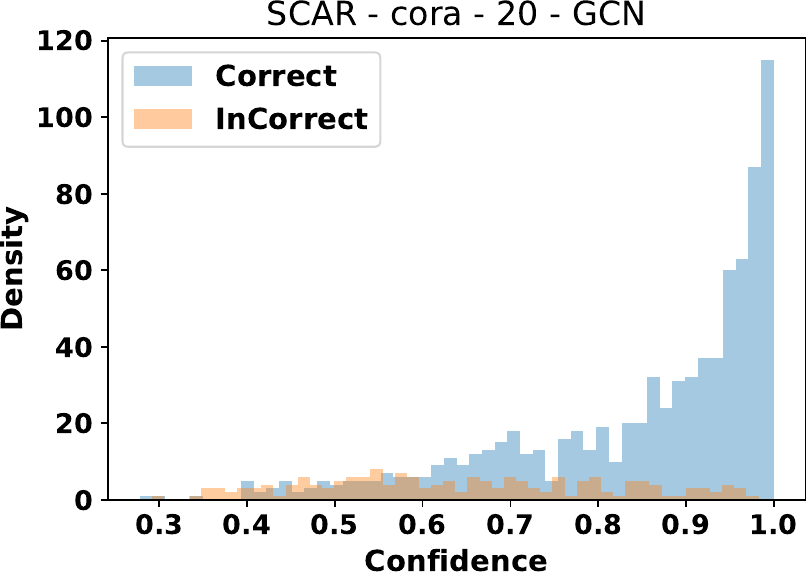} %0.32
			%\caption{fig1}
		\end{minipage}%
	}\subfigure{
		\begin{minipage}[t]{0.25\linewidth}
			%\centering
			\includegraphics[scale=0.25]{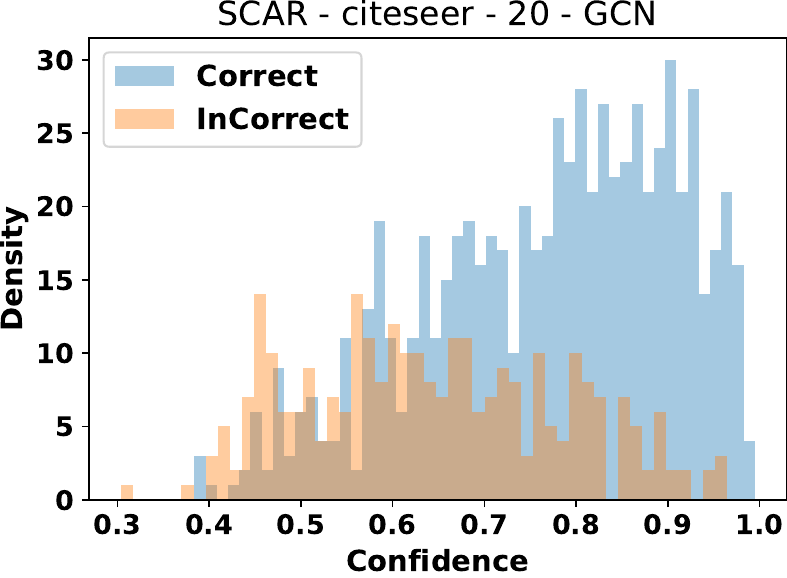}
			%\caption{fig2}
		\end{minipage}
    }\subfigure{
		\begin{minipage}[t]{0.25\linewidth}
			%\centering
			\includegraphics[scale=0.25]{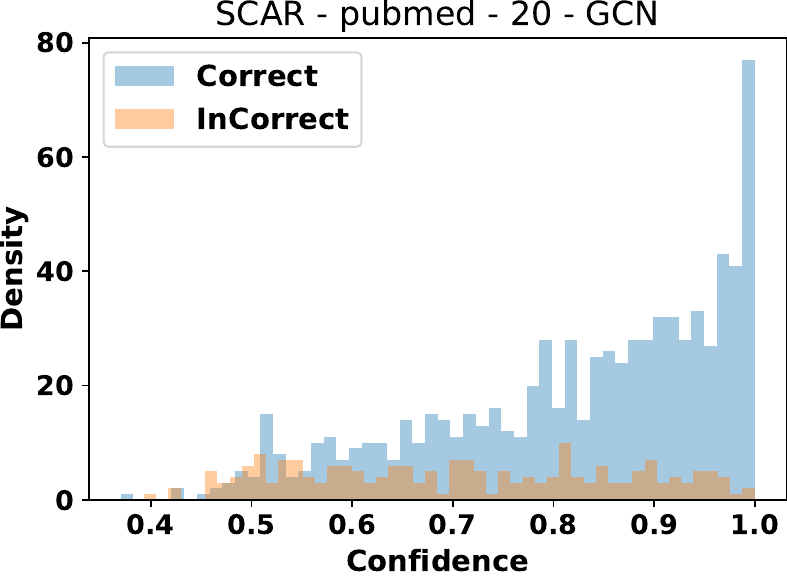}
			%\caption{fig2}
		\end{minipage}
    }\subfigure{
		\begin{minipage}[t]{0.25\linewidth}
			%\centering
			\includegraphics[scale=0.25]{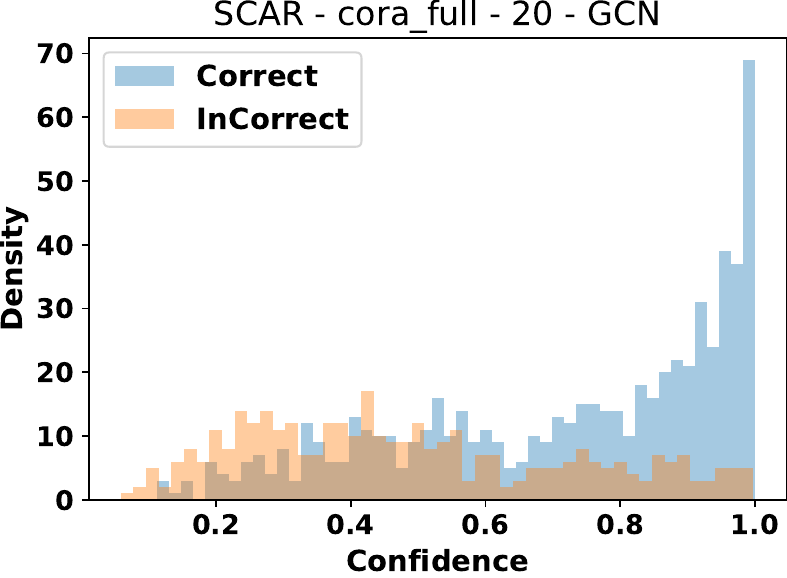}
			%\caption{fig2}
		\end{minipage}
    }
	\caption{Visualization of Reliability diagrams (Top) and Confidence distribution histograms (Bottom) on Cora, Citeseer, Pubmed, and CoraFull datasets. The base model is GCN and L/C=20.} 
	\label{fig:vis}
\end{figure*}

\subsection{Visualization Analysis}
In order to provide an intuitive and clear understanding of the effectiveness of the proposed model. We present visualizations of reliability diagrams and confidence distribution histograms with L/C = 20, across different datasets. The visualizations are shown in Figure \ref{fig:vis}. 

In Figure \ref{fig:vis} (Top), the first row represents the Reliability diagrams of GCN, and the second row represents the Reliability diagrams of the proposed SCAR. The x-axis divides the model’s confidence into 20 equal intervals, while the y-axis represents the average accuracy for each interval. The gray area indicates the expected output values and the blue area represents the model's actual output values. The closer the bars are to the diagonal line (\ie $y=x$), the better calibrated the model. In all datasets, the uncalibrated GCN shows higher average accuracy than confidence in most bins, indicating under-confidence. Our proposed method effectively mitigates this under-confidence issue, aligning the model’s confidence more closely with its accuracy.

In Figure \ref{fig:vis} (Bottom), we visualize the confidence distribution of test nodes from different perspectives, where the x-axis represents the model's confidence and the y-axis represents the density at various confidence levels. The blue represents the confidence distribution of correct predictions, while the yellow represents that of incorrect predictions. From this visualization, we observe that in the case of an uncalibrated GCN, many correctly predicted samples fall into lower confidence intervals. After applying the proposed method, most correctly predicted samples shift to higher confidence intervals, indicating that the model assigns greater confidence to its correct predictions. This shift underscores the superiority of the proposed method in improving the reliability of predictions.

\section{Conclusion}
In this paper, we conduct a comprehensive analysis of confidence calibration in GNNs. To do this, we first theoretically reveal that the weight decay imposed on the final-layer parameters aggravates the under-confidence of GNNs by shrinking the class centroids toward the origin, which reduces class separability. To mitigate this issue, we propose simply reducing the final-layer weight decay to enhance class-centroid distinction and improve confidence calibration at the class-centroid level. Meanwhile, the node-level calibration strategy is proposed as a fine-grained complement to class-centroid-level calibration, which encourages each test node to be closer to its predicted class centroid and far away from other class centroids in the final-layer representation space, thereby enhancing individual confidence calibration. Finally, we establish a unified theoretical framework showing that model confidence is jointly governed by both class-centroid-level and node-level calibration, which highlights the completeness and coherence of our method. Comprehensive experimental results demonstrate that the proposed method consistently outperforms state-of-the-art methods in terms of both effectiveness and efficiency on different datasets and settings.

% \section*{Reproducibility Statement}
% We have made extensive efforts to ensure the reproducibility of our work. The model architecture, training procedure, and hyperparameter settings are described in Appendix \ref{ap:modela}. Complete proofs of the theoretical results are presented in Appendix \ref{ap:proof}. The datasets used in our experiments are publicly available, and the preprocessing steps are explained in the Appendix \ref{ap:exp_set}. Moreover, we provide an anonymous link to the source code in the supplementary materials to facilitate reproduction of our experiments (\hyperlink{https://anonymous.4open.science/r/SCAR-7055}{\color{magenta}Anonymous code link}).

\bibliographystyle{iclr2026_conference} % plain
\bibliography{reference}

\begin{thebibliography}{50}
\providecommand{\natexlab}[1]{#1}
\providecommand{\url}[1]{\texttt{#1}}
\expandafter\ifx\csname urlstyle\endcsname\relax
  \providecommand{\doi}[1]{doi: #1}\else
  \providecommand{\doi}{doi: \begingroup \urlstyle{rm}\Url}\fi

\bibitem[Arora(2020)]{kenw_graph2}
Siddhant Arora.
\newblock A survey on graph neural networks for knowledge graph completion.
\newblock \emph{arXiv preprint arXiv:2007.12374}, 2020.

\bibitem[Bo et~al.(2021)Bo, Wang, Shi, and Shen]{FAGCN}
Deyu Bo, Xiao Wang, Chuan Shi, and Huawei Shen.
\newblock Beyond low-frequency information in graph convolutional networks.
\newblock In \emph{AAAI}, volume~35, pp.\  3950--3957, 2021.

\bibitem[Bojchevski \& G{\"u}nnemann(2017)Bojchevski and G{\"u}nnemann]{corafull}
Aleksandar Bojchevski and Stephan G{\"u}nnemann.
\newblock Deep gaussian embedding of graphs: Unsupervised inductive learning via ranking.
\newblock \emph{arXiv preprint arXiv:1707.03815}, 2017.

\bibitem[Cai \& Wang(2020)Cai and Wang]{cai2020note}
Chen Cai and Yusu Wang.
\newblock A note on over-smoothing for graph neural networks.
\newblock \emph{arXiv preprint arXiv:2006.13318}, 2020.

\bibitem[Cai et~al.(2022)Cai, Zhang, Zhao, Wu, and Wang]{molecular1}
Hanxuan Cai, Huimin Zhang, Duancheng Zhao, Jingxing Wu, and Ling Wang.
\newblock Fp-gnn: a versatile deep learning architecture for enhanced molecular property prediction.
\newblock \emph{Briefings in bioinformatics}, 23\penalty0 (6):\penalty0 bbac408, 2022.

\bibitem[Chen et~al.(2020)Chen, Wei, Huang, Ding, and Li]{gcnii}
Ming Chen, Zhewei Wei, Zengfeng Huang, Bolin Ding, and Yaliang Li.
\newblock Simple and deep graph convolutional networks.
\newblock In \emph{ICML}, pp.\  1725--1735. PMLR, 2020.

\bibitem[Fan et~al.(2019)Fan, Ma, Li, He, Zhao, Tang, and Yin]{graph4rec2}
Wenqi Fan, Yao Ma, Qing Li, Yuan He, Eric Zhao, Jiliang Tang, and Dawei Yin.
\newblock Graph neural networks for social recommendation.
\newblock In \emph{Proceedings of the Web Conference}, pp.\  417--426, 2019.

\bibitem[Fan et~al.(2022)Fan, Liu, Jin, Zhao, Tang, and Li]{graph4rec1}
Wenqi Fan, Xiaorui Liu, Wei Jin, Xiangyu Zhao, Jiliang Tang, and Qing Li.
\newblock Graph trend filtering networks for recommendation.
\newblock In \emph{SIGIR}, pp.\  112--121, 2022.

\bibitem[Fang et~al.(2022)Fang, Wen, Kang, and Liu]{fang2022structure}
Ruiyi Fang, Liangjian Wen, Zhao Kang, and Jianzhuang Liu.
\newblock Structure-preserving graph representation learning.
\newblock In \emph{ICDE}, pp.\  927--932. IEEE, 2022.

\bibitem[Fang et~al.(2025)Fang, Li, Kang, Zeng, Dashtbayaz, Pu, Wang, and Ling]{fang2025benefits}
Ruiyi Fang, Bingheng Li, Zhao Kang, Qiuhao Zeng, Nima~Hosseini Dashtbayaz, Ruizhi Pu, Boyu Wang, and Charles Ling.
\newblock On the benefits of attribute-driven graph domain adaptation.
\newblock In \emph{ICLR}, 2025.

\bibitem[Fang et~al.(2024)Fang, Li, Chen, and Wang]{DCGNN}
Yujie Fang, Xin Li, Qianyu Chen, and Mingzhong Wang.
\newblock Improving gnn calibration with discriminative ability: insights and strategies.
\newblock In \emph{AAAI}, volume~38, pp.\  11953--11960, 2024.

\bibitem[Guo et~al.(2017)Guo, Pleiss, Sun, and Weinberger]{TS}
Chuan Guo, Geoff Pleiss, Yu~Sun, and Kilian~Q Weinberger.
\newblock On calibration of modern neural networks.
\newblock In \emph{ICML}, pp.\  1321--1330. PMLR, 2017.

\bibitem[Hamilton et~al.(2017)Hamilton, Ying, and Leskovec]{graphSAGE2017}
William~L Hamilton, Rex Ying, and Jure Leskovec.
\newblock Inductive representation learning on large graphs.
\newblock In \emph{NeurIPS}, pp.\  1025--1035, 2017.

\bibitem[Hsu et~al.(2022)Hsu, Shen, Tomani, and Cremers]{GATS}
Hans Hao-Hsun Hsu, Yuesong Shen, Christian Tomani, and Daniel Cremers.
\newblock What makes graph neural networks miscalibrated?
\newblock In \emph{NeurIPS}, volume~35, pp.\  13775--13786, 2022.

\bibitem[Huang et~al.(2023{\natexlab{a}})Huang, Du, Chen, Fu, Han, and Zhang]{mid-gcn}
Jincheng Huang, Lun Du, Xu~Chen, Qiang Fu, Shi Han, and Dongmei Zhang.
\newblock Robust mid-pass filtering graph convolutional networks.
\newblock In \emph{Proceedings of the Web Conference}, pp.\  328--338, 2023{\natexlab{a}}.

\bibitem[Huang et~al.(2023{\natexlab{b}})Huang, Li, Huang, Chen, and Zhang]{sagcn}
Jincheng Huang, Ping Li, Rui Huang, Na~Chen, and Acong Zhang.
\newblock Revisiting the role of heterophily in graph representation learning: An edge classification perspective.
\newblock \emph{ACM Transactions on Knowledge Discovery from Data}, 18:\penalty0 13:1--13:17, 2023{\natexlab{b}}.

\bibitem[Huang et~al.(2024)Huang, Shen, Shi, and Zhu]{huang2024on}
Jincheng Huang, Jialie Shen, Xiaoshuang Shi, and Xiaofeng Zhu.
\newblock On which nodes does {GCN} fail? enhancing {GCN} from the node perspective.
\newblock In \emph{ICML}, 2024.

\bibitem[Jin et~al.(2020)Jin, Ma, Liu, Tang, Wang, and Tang]{pro-gnn}
Wei Jin, Yao Ma, Xiaorui Liu, Xianfeng Tang, Suhang Wang, and Jiliang Tang.
\newblock Graph structure learning for robust graph neural networks.
\newblock In \emph{SIGKDD}, pp.\  66--74, 2020.

\bibitem[Kingma \& Ba(2015)Kingma and Ba]{adam}
Diederik~P. Kingma and Jimmy Ba.
\newblock Adam: {A} method for stochastic optimization.
\newblock In Yoshua Bengio and Yann LeCun (eds.), \emph{ICLR}, 2015.

\bibitem[Kipf \& Welling(2017)Kipf and Welling]{gcn_kipf}
Thomas~N. Kipf and Max Welling.
\newblock Semi-supervised classification with graph convolutional networks.
\newblock In \emph{ICLR}, 2017.

\bibitem[Kull et~al.(2019)Kull, Perello~Nieto, K{\"a}ngsepp, Silva~Filho, Song, and Flach]{MS}
Meelis Kull, Miquel Perello~Nieto, Markus K{\"a}ngsepp, Telmo Silva~Filho, Hao Song, and Peter Flach.
\newblock Beyond temperature scaling: Obtaining well-calibrated multi-class probabilities with dirichlet calibration.
\newblock In \emph{NeurIPS}, volume~32, 2019.

\bibitem[Kumar et~al.(2018)Kumar, Sarawagi, and Jain]{calibration5}
Aviral Kumar, Sunita Sarawagi, and Ujjwal Jain.
\newblock Trainable calibration measures for neural networks from kernel mean embeddings.
\newblock In \emph{ICML}, pp.\  2805--2814. PMLR, 2018.

\bibitem[Liu et~al.(2022)Liu, Jin, Pan, Zhou, Zheng, Xia, and Philip]{GCL}
Yixin Liu, Ming Jin, Shirui Pan, Chuan Zhou, Yu~Zheng, Feng Xia, and S~Yu Philip.
\newblock Graph self-supervised learning: A survey.
\newblock \emph{IEEE Transactions on Knowledge and Data Engineering}, 35\penalty0 (6):\penalty0 5879--5900, 2022.

\bibitem[Mo et~al.(2024{\natexlab{a}})Mo, Lu, Yu, Zhu, and Wang]{mo2024revisiting}
Yujie Mo, Zhihe Lu, Runpeng Yu, Xiaofeng Zhu, and Xinchao Wang.
\newblock Revisiting self-supervised heterogeneous graph learning from spectral clustering perspective.
\newblock In \emph{NeurIPS}, 2024{\natexlab{a}}.

\bibitem[Mo et~al.(2024{\natexlab{b}})Mo, Yu, Zhu, and Wang]{mo2024hg}
Yujie Mo, Runpeng Yu, Xiaofeng Zhu, and Xinchao Wang.
\newblock Hg-adapter: Improving pre-trained heterogeneous graph neural networks with dual adapters.
\newblock In \emph{ICLR}, 2024{\natexlab{b}}.

\bibitem[Naeini et~al.(2015)Naeini, Cooper, and Hauskrecht]{ECE1}
Mahdi~Pakdaman Naeini, Gregory Cooper, and Milos Hauskrecht.
\newblock Obtaining well calibrated probabilities using bayesian binning.
\newblock In \emph{AAAI}, volume~29, 2015.

\bibitem[Niculescu-Mizil \& Caruana(2005)Niculescu-Mizil and Caruana]{calibration2}
Alexandru Niculescu-Mizil and Rich Caruana.
\newblock Predicting good probabilities with supervised learning.
\newblock In \emph{ICML}, pp.\  625--632, 2005.

\bibitem[Papyan et~al.(2020)Papyan, Han, and Donoho]{neural_collapse}
Vardan Papyan, XY~Han, and David~L Donoho.
\newblock Prevalence of neural collapse during the terminal phase of deep learning training.
\newblock \emph{Proceedings of the National Academy of Sciences}, 117\penalty0 (40):\penalty0 24652--24663, 2020.

\bibitem[Pei et~al.(2020)Pei, Wei, Chang, Lei, and Yang]{pei}
Hongbin Pei, Bingzhe Wei, Kevin~Chen{-}Chuan Chang, Yu~Lei, and Bo~Yang.
\newblock Geom-gcn: Geometric graph convolutional networks.
\newblock In \emph{ICLR}, 2020.

\bibitem[Platonov et~al.(2023)Platonov, Kuznedelev, Diskin, Babenko, and Prokhorenkova]{roman}
Oleg Platonov, Denis Kuznedelev, Michael Diskin, Artem Babenko, and Liudmila Prokhorenkova.
\newblock A critical look at the evaluation of gnns under heterophily: are we really making progress?
\newblock \emph{arXiv preprint arXiv:2302.11640}, 2023.

\bibitem[Platt et~al.(1999)]{platt}
John Platt et~al.
\newblock Probabilistic outputs for support vector machines and comparisons to regularized likelihood methods.
\newblock \emph{Advances in large margin classifiers}, 10\penalty0 (3):\penalty0 61--74, 1999.

\bibitem[Rahimi et~al.(2020)Rahimi, Shaban, Cheng, Hartley, and Boots]{calibration3}
Amir Rahimi, Amirreza Shaban, Ching-An Cheng, Richard Hartley, and Byron Boots.
\newblock Intra order-preserving functions for calibration of multi-class neural networks.
\newblock In \emph{NeurIPS}, volume~33, pp.\  13456--13467, 2020.

\bibitem[Rusch et~al.(2023)Rusch, Bronstein, and Mishra]{rusch2023survey}
T~Konstantin Rusch, Michael~M Bronstein, and Siddhartha Mishra.
\newblock A survey on oversmoothing in graph neural networks.
\newblock \emph{arXiv preprint arXiv:2303.10993}, 2023.

\bibitem[Sen et~al.(2008)Sen, Namata, Bilgic, Getoor, Galligher, and Eliassi-Rad]{citation}
Prithviraj Sen, Galileo Namata, Mustafa Bilgic, Lise Getoor, Brian Galligher, and Tina Eliassi-Rad.
\newblock Collective classification in network data.
\newblock \emph{AI magazine}, 29\penalty0 (3):\penalty0 93--93, 2008.

\bibitem[Stadler et~al.(2021)Stadler, Charpentier, Geisler, Z{\"u}gner, and G{\"u}nnemann]{stadler2021graph}
Maximilian Stadler, Bertrand Charpentier, Simon Geisler, Daniel Z{\"u}gner, and Stephan G{\"u}nnemann.
\newblock Graph posterior network: Bayesian predictive uncertainty for node classification.
\newblock In \emph{NeurIPS}, volume~34, pp.\  18033--18048, 2021.

\bibitem[Tang et~al.(2024)Tang, Wu, Wu, Huang, Chen, Lei, and Meng]{simcalib}
Boshi Tang, Zhiyong Wu, Xixin Wu, Qiaochu Huang, Jun Chen, Shun Lei, and Helen Meng.
\newblock Simcalib: Graph neural network calibration based on similarity between nodes.
\newblock In \emph{AAAI}, volume~38, pp.\  15267--15275, 2024.

\bibitem[Velickovic et~al.(2018)Velickovic, Cucurull, Casanova, Romero, Li{\`{o}}, and Bengio]{gat}
Petar Velickovic, Guillem Cucurull, Arantxa Casanova, Adriana Romero, Pietro Li{\`{o}}, and Yoshua Bengio.
\newblock Graph attention networks.
\newblock In \emph{ICLR}. OpenReview.net, 2018.

\bibitem[Wang et~al.(2021{\natexlab{a}})Wang, Feng, and Zhang]{calibration4}
Deng-Bao Wang, Lei Feng, and Min-Ling Zhang.
\newblock Rethinking calibration of deep neural networks: Do not be afraid of overconfidence.
\newblock In \emph{NeurIPS}, volume~34, pp.\  11809--11820, 2021{\natexlab{a}}.

\bibitem[Wang et~al.(2022)Wang, Yang, and Cheng]{GCL-MM}
Min Wang, Hao Yang, and Qing Cheng.
\newblock Gcl: Graph calibration loss for trustworthy graph neural network.
\newblock In \emph{MM}, pp.\  988--996, 2022.

\bibitem[Wang et~al.(2024)Wang, Yang, Huang, and Cheng]{AULS}
Min Wang, Hao Yang, Jincai Huang, and Qing Cheng.
\newblock Moderate message passing improves calibration: A universal way to mitigate confidence bias in graph neural networks.
\newblock In \emph{AAAI}, volume~38, pp.\  21681--21689, 2024.

\bibitem[Wang et~al.(2021{\natexlab{b}})Wang, Liu, Shi, and Yang]{CaGCN}
Xiao Wang, Hongrui Liu, Chuan Shi, and Cheng Yang.
\newblock Be confident! towards trustworthy graph neural networks via confidence calibration.
\newblock In \emph{NeurIPS}, volume~34, pp.\  23768--23779, 2021{\natexlab{b}}.

\bibitem[Wieder et~al.(2020)Wieder, Kohlbacher, Kuenemann, Garon, Ducrot, Seidel, and Langer]{molecular2}
Oliver Wieder, Stefan Kohlbacher, M{\'e}laine Kuenemann, Arthur Garon, Pierre Ducrot, Thomas Seidel, and Thierry Langer.
\newblock A compact review of molecular property prediction with graph neural networks.
\newblock \emph{Drug Discovery Today: Technologies}, 37:\penalty0 1--12, 2020.

\bibitem[Xu et~al.(2018)Xu, Li, Tian, Sonobe, Kawarabayashi, and Jegelka]{jk-net}
Keyulu Xu, Chengtao Li, Yonglong Tian, Tomohiro Sonobe, Ken-ichi Kawarabayashi, and Stefanie Jegelka.
\newblock Representation learning on graphs with jumping knowledge networks.
\newblock In \emph{ICML}, pp.\  5453--5462. PMLR, 2018.

\bibitem[Yang et~al.(2024)Yang, Yang, Shi, Li, Zhang, and Zhou]{DGC}
Cheng Yang, Chengdong Yang, Chuan Shi, Yawen Li, Zhiqiang Zhang, and Jun Zhou.
\newblock Calibrating graph neural networks from a data-centric perspective.
\newblock In \emph{Proceedings of the ACM on Web Conference}, pp.\  745--755, 2024.

\bibitem[Yang et~al.(2023{\natexlab{a}})Yang, Liu, Zhou, Wang, Tu, Zheng, Liu, Fang, and Zhu]{xihong1}
Xihong Yang, Yue Liu, Sihang Zhou, Siwei Wang, Wenxuan Tu, Qun Zheng, Xinwang Liu, Liming Fang, and En~Zhu.
\newblock Cluster-guided contrastive graph clustering network.
\newblock In \emph{AAAI}, volume~37, pp.\  10834--10842, 2023{\natexlab{a}}.

\bibitem[Yang et~al.(2023{\natexlab{b}})Yang, Tan, Liu, Liang, Wang, Zhou, Xia, Li, Liu, and Zhu]{xihong2}
Xihong Yang, Cheng Tan, Yue Liu, Ke~Liang, Siwei Wang, Sihang Zhou, Jun Xia, Stan~Z Li, Xinwang Liu, and En~Zhu.
\newblock Convert: Contrastive graph clustering with reliable augmentation.
\newblock In \emph{MM}, pp.\  319--327, 2023{\natexlab{b}}.

\bibitem[Yang et~al.(2025)Yang, Wang, Chen, Fan, Zhao, Zhu, Liu, and Lian]{xihong11}
Xihong Yang, Yiqi Wang, Jin Chen, Wenqi Fan, Xiangyu Zhao, En~Zhu, Xinwang Liu, and Defu Lian.
\newblock Dual test-time training for out-of-distribution recommender system.
\newblock \emph{IEEE Transactions on Knowledge and Data Engineering}, 37\penalty0 (6):\penalty0 3312--3326, 2025.
\newblock \doi{10.1109/TKDE.2025.3548160}.

\bibitem[Ye et~al.(2022)Ye, Kumar, Sing, Song, and Wang]{kenw_graph1}
Zi~Ye, Yogan~Jaya Kumar, Goh~Ong Sing, Fengyan Song, and Junsong Wang.
\newblock A comprehensive survey of graph neural networks for knowledge graphs.
\newblock \emph{IEEE Access}, 10:\penalty0 75729--75741, 2022.

\bibitem[You et~al.(2025)You, Hooi, Wang, Choo, Yang, Yuan, Huang, and Cai]{you2025lost}
Wenhao You, Bryan Hooi, Yiwei Wang, Euijin Choo, Ming-Hsuan Yang, Junsong Yuan, Zi~Huang, and Yujun Cai.
\newblock {Lost in Edits? A $\lambda $-Compass for AIGC Provenance}.
\newblock \emph{arXiv preprint arXiv:2502.04364}, 2025.

\bibitem[Zhuang et~al.(2025)Zhuang, Jiang, Zheng, Wang, and Zhao]{GETS}
Dingyi Zhuang, Chonghe Jiang, Yunhan Zheng, Shenhao Wang, and Jinhua Zhao.
\newblock Gets: Ensemble temperature scaling for calibration in graph neural networks.
\newblock In \emph{ICLR}, 2025.

\end{thebibliography}

%%%%%%%%%%%%%%%%%%%%%%%%%%%%%%%%%%%%%%%%%%%%%%%%%%%%%%%%%%%%

\appendix

\section{Related Work}
\label{ap:related_work}

This section briefly reviews the topics related to this work, including graph neural networks and confidence calibration.

\subsection{Graph Neural Networks}

Graph Neural Networks (GNNs) \citep{xihong1,gcn_kipf,gat,mid-gcn,xihong2,sagcn,huang2024on,mo2024hg,mo2024revisiting,fang2022structure,fang2025benefits} have emerged as powerful tools for modeling and analyzing graph-structured data. Early approaches, such as the Graph Convolutional Network (GCN) \citep{gcn_kipf}, leveraged spectral methods to define convolution operations on graphs, enabling effective semi-supervised learning on citation networks. Subsequent models, including GraphSAGE \citep{graphSAGE2017} and GAT \citep{gat}, introduced sampling-based and attention mechanisms, respectively, to enhance scalability and expressiveness.

However, there are many problems in the basic research and practical application of GNN. For example, over-smoothing and robustness.
Subsequent GNN works focus on solving the shortcomings of GNN. To address the challenges of over-smoothing and limited label utilization, recent advancements such as JK-Nets \citep{jk-net} and GCNII \citep{gcnii} proposed architectural modifications to preserve the previous layers' representation information for deep layer architecture and improve the performance of GNNs. For robustness issues, recent studies have also shown that GCNs are vulnerable to adversarial attacks. To solve this issue, recent studies proposed many robust GNNs. For example, Pro-GNN \citep{pro-gnn}  de-noises graphs by constraining graph data to be low-rank, sparse, and feature smoothing. Mid-GCN \citep{mid-gcn} find that mid-frequency signal in the spectral domain of the graph shows strong robustness, thus designing a mid-pass filtering GNN. Building on these advancements, an emerging line of research focuses on graph calibration, aiming to ensure that the model's confidence aligns with its predictive accuracy, which is crucial for trustworthy deployment in real-world applications.

\subsection{Confidence Calibration}

Early research on confidence calibration primarily focused on the fields of computer vision and natural language processing \citep{TS,calibration2,calibration3,calibration4,calibration5,you2025lost,xihong11}. \cite{TS} revealed that deep neural networks are often poorly calibrated and investigated the factors influencing calibration. Platt Scaling \citep{platt}, originally proposed for binary classification models, adjusts raw model outputs into reliable class probability distributions by learning parameters optimized on a validation dataset. To calibrate on multi-class scenarios. Temperature Scaling (TS) \citep{TS} proposed to directly modify the model’s output probabilities by introducing a temperature coefficient and fine-tuning on a validation set, which is the basis for much subsequent work.

Graph Calibration has emerged in recent years, and CaGCN \citep{CaGCN} was the first to discover that GNNs are under-confidence, which is very different from other modern neural networks, which are generally known to be over-confident. Therefore, calibration on GNNs needs more specific approaches. Current graph calibration methods can be classified into two categories, \ie regularization methods and post-hoc methods. Regularization methods produce calibration regularization
terms when training GNNs. For example, GPN \citep{stadler2021graph} calibrates GNNs by performing Bayesian posterior updates for predictions on interdependent nodes. GCL \citep{GCL} achieves calibration by adding a minimal-entropy regularizer to the KL divergence. Post-hoc methods adjust the confidence when model training is completed, and current graph post-hoc methods follow a common paradigm that an additional calibration GNN is trained on the validation set to learn a suitable temperature coefficient mapping, which is then applied via TS for calibration test nodes. For example, CaGCN \citep{CaGCN} trains another calibration GNN model on the validation set by using the output probabilities as the input to learn the coefficient of temperature scaling to adjust the confidence, while GATS \citep{GATS} considers more factors as input, such as neighborhood similarities and homophily. SimCalib \citep{simcalib} relies on node similarity and homophily as input to the calibration GNN to learn the temperature coefficient. DC GNN \citep{DCGNN} designs the self-loop same-class-neighbor ratio as the factor to input to the calibration GNN, improving calibration and discriminative ability. Otherwise, there is a data-centric method, DCGC \citep{DGC}, that can improve the existing post hoc method, which observed that a higher homophily ratio leads to better graph structure calibration. To leverage this, it learns a new homophily-based graph structure to enhance calibration. GETS \citep{GETS} leverages a mixture-of-experts (MoE) architecture with diverse input transformations, and has demonstrated strong performance on graph calibration tasks. Although the above methods achieve excellent results, they were all designed by introducing extra components for calibration, while neglecting an
in-depth exploration of the intrinsic relationship between the
model itself and miss-calibration.

\section{Proofs in Section \ref{sec:method}}
\label{ap:proof}

\subsection{Proof of Theorem \ref{thm:1}}
\label{ap:th1}

\begin{theorem}
Given the learning rate (\ie $\eta$), the final-layer parameters (\ie $\mathbf{W}^{(K)}$). For an arbitrary node $v$ in model training stage, its output probabilities on class $i$ (\ie $s_{v,i}$) is updated by:
\begin{equation}
\begin{aligned}
&s'_{v,i} = \frac{e^{b_{v,i}/\tau}}{e^{b_{v,i}/\tau}+ \sum_{j\ne i}^{c}e^{b_{v,j}/\tau}\cdot \psi_{i,j}}\\
&\begin{array}{l}
s.t.,~\left\{\begin{matrix}
\psi_{i,j} = e^{\eta (s_{v,i}-y_{v,i} - s_{v,j} + y_{v,j})(\mathbf{h}_{v}^{(K)})^{\top}\mathbf{h}_{v}'^{(K)}} 
\\
\tau = \frac{1}{1-\eta \lambda^{(K)}} 
\\
b_{v,i} = (\mathbf{W}_{:,i}^{(K)})^{\top}\mathbf{h}_{v}'^{(K)}
\end{matrix}\right.
 ,
\end{array}
\end{aligned}
\end{equation}
where the value and representation $z_{v,i}'$ and $\mathbf{h}_{v}'^{(K)}$ are the updated results of $z_{v,i}$ and $\mathbf{h}_{v}^{(K)}$ after the next epoch.
\end{theorem}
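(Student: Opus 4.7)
The plan is to track a single gradient-descent step on the weight-decayed cross-entropy loss and push the update through the softmax. First, I would specialise the loss in Eq. (\ref{eq:4}) to the final-layer column $\mathbf{W}^{(K)}_{:,i}$ and use the well-known cross-entropy identity $\partial \mathcal{L}_v / \partial z_{v,i} = s_{v,i} - y_{v,i}$ together with $\partial z_{v,i} / \partial \mathbf{W}^{(K)}_{:,i} = \mathbf{h}_v^{(K)}$ to obtain
\begin{equation*}
\frac{\partial \mathcal{L}_v}{\partial \mathbf{W}^{(K)}_{:,i}} \;=\; (s_{v,i} - y_{v,i})\,\mathbf{h}_v^{(K)} \;+\; \lambda^{(K)}\,\mathbf{W}^{(K)}_{:,i}.
\end{equation*}
One SGD step therefore yields the new column $\widetilde{\mathbf{W}}^{(K)}_{:,i} = (1-\eta\lambda^{(K)})\,\mathbf{W}^{(K)}_{:,i} - \eta(s_{v,i}-y_{v,i})\,\mathbf{h}_v^{(K)}$.

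Next, I would evaluate the updated logit by pairing $\widetilde{\mathbf{W}}^{(K)}_{:,i}$ with the updated representation $\mathbf{h}_v'^{(K)}$, which gives $z'_{v,i} = (1-\eta\lambda^{(K)})\,b_{v,i} - \eta(s_{v,i}-y_{v,i})(\mathbf{h}_v^{(K)})^{\top}\mathbf{h}_v'^{(K)}$. Substituting the definition $\tau = 1/(1-\eta\lambda^{(K)})$, this becomes $z'_{v,i} = b_{v,i}/\tau - \eta(s_{v,i}-y_{v,i})(\mathbf{h}_v^{(K)})^{\top}\mathbf{h}_v'^{(K)}$. Exponentiating and inserting into the softmax, the cross-term involving $\mathbf{h}_v^{(K)\top}\mathbf{h}_v'^{(K)}$ appears in both numerator and denominator.

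Finally, I would divide numerator and denominator by $\exp\bigl(-\eta(s_{v,i}-y_{v,i})(\mathbf{h}_v^{(K)})^{\top}\mathbf{h}_v'^{(K)}\bigr)$ so that the numerator cleans up to $e^{b_{v,i}/\tau}$ and each denominator term for $j\ne i$ becomes $e^{b_{v,j}/\tau}\cdot\psi_{i,j}$ with the $\psi_{i,j}$ in the statement, while the $j=i$ term reduces to $e^{b_{v,i}/\tau}$. This recovers the claimed closed form.

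The calculation itself is routine; the main obstacle is the bookkeeping of \emph{which} $\mathbf{W}$ and $\mathbf{h}$ are old versus new. The statement defines $b_{v,i}$ using the \emph{old} $\mathbf{W}^{(K)}_{:,i}$ paired with the \emph{new} representation $\mathbf{h}_v'^{(K)}$, which is precisely what drops out of the linear gradient-descent update; one must resist the temptation to also propagate updates of earlier-layer parameters into $\mathbf{h}_v'^{(K)}$, since the theorem treats $\mathbf{h}_v'^{(K)}$ as an exogenously given post-update quantity. Once this indexing convention is respected, the derivation reduces to factoring $(1-\eta\lambda^{(K)})$ out of the logit and normalising the softmax, so no additional assumption beyond the form of Eq. (\ref{eq:4}) is needed.
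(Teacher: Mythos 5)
Your proposal is correct and follows essentially the same route as the paper's own proof: differentiate the weight-decayed cross-entropy with respect to $\mathbf{W}^{(K)}_{:,i}$, apply one SGD step to get $\mathbf{W}'^{(K)}_{:,i}=(1-\eta\lambda^{(K)})\mathbf{W}^{(K)}_{:,i}-\eta(s_{v,i}-y_{v,i})\mathbf{h}_v^{(K)}$, pair it with the updated representation $\mathbf{h}_v'^{(K)}$, and normalise the softmax by the common exponential factor so that $(1-\eta\lambda^{(K)})$ appears as $1/\tau$ and the residual cross-terms collect into $\psi_{i,j}$. Your remark about keeping the old/new indexing straight (old $\mathbf{W}^{(K)}$ inside $b_{v,i}$, new $\mathbf{h}_v'^{(K)}$, and not propagating earlier-layer updates) is exactly the convention the paper's derivation relies on.
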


\begin{proof}
Given the CE and weight decay:
\begin{equation}
    \mathcal{L}_{v} = -\sum_{i=1}^{c}y_{v,i}\log s_{v,i} +  \sum_{k}^{K} \frac{\lambda^{(k)}}{2}||\mathbf{W}^{(k)}||_{F}^{2},
\end{equation}

Then we can calculate the derivative of the final-layer parameters 
(\ie $\mathbf{W}^{(K)} \in \mathbb{R}^{d \times c}$):
\begin{equation}
\label{eq:ap1}
\frac{\partial \mathcal{L}_{v}}{\partial \mathbf{W}_{:,i}^{(K)}} = \sum _{j=1}^{c}(\frac{\partial \mathcal{L}_{v}}{\partial s_{v,j}} \frac{\partial s_{v,j}}{\partial z_{v,i}})\frac{\partial z_{v,i}}{\partial \mathbf{W}_{:,i}^{(K)}} + \frac{\partial \mathcal{L}_{v}}{\mathbf{W}_{:,i}^{(K)}} = (s_{v,i} - y_{v,i})\mathbf{h}_{v}^{(K)} + \lambda^{(K)} \mathbf{W}_{:,i}^{(K)}.
\end{equation}

The details of derivation are listed in Appendix \ref{ap:de1}. Then the 
final-layer parameter $\mathbf{W}^{(K)}$ is updated as:
\begin{equation}
\mathbf{W}'^{(K)}_{:,i} = (1-\eta \lambda^{(K)})\mathbf{W}_{:,i}^{(K)} - \eta (s_{v,i} - y_{v,i})\mathbf{h}_{v}^{(K)},
\end{equation}
where $\mathbf{W}'^{(K)}_{:,i}$ is the next epoch of $\mathbf{W}_{:,i}^{(K)}$ and then the output logits $z_{v,i}$ rely on $\mathbf{W}^{(K)}$ is updated as:
\begin{equation}
\begin{aligned}
z'_{v,i} &= (1-\eta \lambda^{(K)})(\mathbf{h}_{v}'^{(l)})^{\top}\mathbf{W}_{:,i}'^{(K)} \\
&= (1-\eta \lambda^{(K)})(\mathbf{h}_{v}'^{(l)})^{\top}\mathbf{W}_{:,i}^{(K)} - \eta (s_{v,i} - y_{v,i})(\mathbf{h}_{v}'^{(l)})^{\top}\mathbf{h}_{v}^{(l)},
\end{aligned}
\end{equation}
where $\mathbf{h}_{v}'^{(l)}$ and $z'_{v,i}$ is the next epoch of $\mathbf{h}_{v}^{(l)}$ and $z_{v,i}$, respectively. Furthermore, the output probabilities can be obtained:
\begin{equation}
\begin{aligned}
s_{v,i} &= \frac{e^{z'_{v,i}}}{e^{z'_{v,i}} + \sum^{c}_{k\ne j}e^{z'_{v,k}}}\\
& =   \frac{e^{(1-\eta \lambda^{(K)})(\mathbf{h}_{v}'^{(l)})^{\top}\mathbf{W}_{:,i}^{(K)} - \eta (s_{v,i} - y_{v,i})(\mathbf{h}_{v}'^{(l)})^{\top}\mathbf{h}_{v}^{(l)}}}{e^{(1-\eta \lambda^{(K)})(\mathbf{h}_{v}'^{(l)})^{\top}\mathbf{W}_{:,i}'^{(K)}- \eta (s_{v,i} - y_{v,i})(\mathbf{h}_{v}'^{(l)})^{\top}\mathbf{h}_{v}^{(l)} }+ \sum_{j\ne i}^{c}e^{(1-\eta \lambda^{(K)})(\mathbf{h}_{v}'^{(l)})^{\top}\mathbf{W}_{:,i}^{(K)} - \eta (s_{v,j} - y_{v,j})(\mathbf{h}_{v}'^{(l)})^{\top}\mathbf{h}_{v}^{(l)}}}.
\end{aligned}
\end{equation}

We let $\tau = \frac{1}{1-\eta \lambda^{(K)}}$ and $b_{i} = (\mathbf{h}_{v}'^{(l)})^{\top}\mathbf{W}_{:,i}^{(K)}$, then we have:
\begin{equation}
\begin{aligned}
s_{v,i} & =   \frac{e^{b_{i}/\tau} /e^{ \eta (s_{v,i} - y_{v,i})(\mathbf{h}_{v}'^{(l)})^{T}\mathbf{h}_{v}^{(l)}}}{e^{b_{i}/\tau} / e^{\eta (s_{v,i} - y_{v,i})(\mathbf{h}_{v}'^{(l)})^{T}\mathbf{h}_{v}^{(l)} }+ \sum_{j\ne i}^{c}e^{b_{i}/\tau}/ e^{ \eta (s_{v,i} - y_{v,i})(\mathbf{h}_{v}'^{(l)})^{T}\mathbf{h}_{v}^{(l)}}.} \\
& =   \frac{e^{b_{i}/\tau}}{e^{b_{i}/\tau}+ \sum_{j\ne i}^{c}e^{b_{i}/\tau}.e^{\eta (s_{v,i} - y_{v,i} - s_{v,j} + y_{v,j})(\mathbf{h}_{v}^{(K)})^{\top}\mathbf{h}_{v}'^{(K)}}}.
\end{aligned}
\end{equation}

We let $\psi_{i,j} = e^{\eta (s_{v,i} - y_{v,i} - s_{v,j} + y_{v,j})(\mathbf{h}_{v}^{(K)})^{\top}\mathbf{h}_{v}'^{(K)}}$, thus, we can get the finally form:
\begin{equation}
s'_{v,i} = \frac{e^{b_{v,i}/\tau}}{e^{b_{v,i}/\tau}+ \sum_{j\ne i}^{c}e^{b_{v,j}/\tau}.\psi_{i,j}}
\end{equation}

\end{proof}

\subsubsection{derivation of Eq.(\ref{eq:ap1})}
\label{ap:de1}

\begin{proof}
\begin{equation}
\frac{\partial \mathcal{L}_{v}}{\partial \mathbf{W}_{:,i}^{(K)}} = \sum _{j=1}^{c}(\frac{\partial \mathcal{L}_{v}}{\partial s_{v,j}} \frac{\partial s_{v,j}}{\partial z_{v,i}})\frac{\partial z_{v,i}}{\partial \mathbf{W}_{:,i}^{(K)}} + \frac{\partial \mathcal{L}_{v}}{\mathbf{W}_{:,i}^{(K)}} 
\end{equation}

We can calculate the derivatives of each part separately.

\begin{equation}
    \frac{\partial \mathcal{L}}{\partial s_{v, j}} = \frac{\partial \sum_{i=1}^{c}y_{v,i}\log s_{v,i} + \lambda^{(K)} \sum_{k}^{K}||\mathbf{W}^{(k)}||_{F}^{2}}{\partial s_{v, j}} = -\frac{y_{v,j}}{s_{v,j}}.
\end{equation}

For $\frac{\partial s_{v,j}}{\partial z_{v,i}}$, the forward equation is softmax function:
\begin{equation}
s_{v,j} = \frac{e^{z_{v,j}}}{e^{z_{v,j}} + \sum^{c}_{k\ne j}e^{z_{v,k}}}.
\end{equation}

There are two cases of derivation here. In the first case, 
when $j=i$, the independent variable appears in both the numerator and the denominator, we have

\begin{equation}
    \begin{aligned}
\frac{\partial  \frac{e^{z_{v,i}}}{\sum^{c}_{k}e^{z_{v,k}}}}{\partial z_{v,i}} &= -\frac{e^{z_{v,i}}.e^{z_{v,i}}}{(\sum^{c}_{k=1}e^{z_{v,k}})^{2}} + \frac{e^{z_{v,i}}}{\sum^{c}_{k=1}e^{z_{v,k}}}\\
& = \frac{e^{z_{v,i}}}{\sum^{c}_{k=1}e^{z_{v,k}}}(1- \frac{e^{z_{v,i}}}{\sum^{c}_{k=1}e^{z_{v,k}}})\\
& = s_{v,i}(1-s_{v,i}).
\end{aligned}
\end{equation}

When $j \ne i$, the independent variable appears only in the denominator, it is easy to obtain:
\begin{equation}
    \frac{\partial  \frac{e^{z_{v,j}}}{\sum^{c}_{k}e^{z_{v,k}}}}{\partial z_{v,i}} = -\frac{e^{z_{v,j}}}{\sum^{c}_{k}e^{z_{v,k}}}.\frac{e^{z_{v,i}}}{\sum^{c}_{k}e^{z_{v,k}}} = -s_{v,k}s_{v,i}.
\end{equation}

Finally, we have
\begin{equation}
    \begin{aligned}
\frac{\partial \mathcal{L}_{v}}{\partial \mathbf{W}_{:,i}^{(K)}} &= (\sum_{k \ne i}^{c}\frac{y_{v,k}}{s_{v,k}}.s_{v,k}.s_{v,i} - \frac{y_{v,i}}{s_{v,i}}.s_{v,i}.(1-s_{v,i}))\frac{\partial  s_{v,i}}{\mathbf{W}_{:,i}^{(K)}} + \frac{\partial \lambda^{(K)} \sum_{k}^{K}||\mathbf{W}^{(k)}||_{F}^{2}}{\mathbf{W}_{:,i}^{(K)}}\\
& = (s_{v,i} - y_{v,i})\mathbf{h}_{v}^{(K)} + \lambda^{(K)} \mathbf{W}_{:,i}^{(K)}
\end{aligned} .
\end{equation}

\end{proof}

\subsection{Proof for Theorem \ref{thm:closed_from}}
\label{ap:thm_closedfrom}

\begin{theorem}
(Closed-form Solution for $\mathbf{W}^{(K)}$) Given the objective function Eq. (\ref{eq:4}), the solution of $\mathbf{W}^{(K)}$ can be represent as:
\begin{equation}
(\mathbf{W}_{:,i}^{(K)})^{*} = \frac{1}{\lambda^{(K)}}(\sum_{u:y_{u,i}=1}(1-s_{u,i})\mathbf{h}_{u}^{(K)} - \sum_{v:y_{v,i}\ne 1} s_{v,i}\mathbf{h}_{v}^{(K)})
\end{equation}
\end{theorem}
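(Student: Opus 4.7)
The plan is to obtain the stated formula as the first-order optimality condition of the total training loss with respect to $\mathbf{W}^{(K)}$. Because $\mathbf{W}^{(K)}$ appears only in the final linear layer that maps the representations $\mathbf{h}_v^{(K)}$ to the logits $\mathbf{z}_v$, I can treat the representations $\{\mathbf{h}_v^{(K)}\}$ as inputs (they depend on the earlier layers but not directly on $\mathbf{W}^{(K)}$), aggregate the per-node losses in Eq.~(\ref{eq:4}) over the training set, differentiate with respect to the $i$-th column of $\mathbf{W}^{(K)}$, and solve the resulting linear equation.

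Concretely, I would first reuse the per-node gradient already computed for Theorem~\ref{thm:1}, namely
\begin{equation}
\frac{\partial \mathcal{L}_v}{\partial \mathbf{W}_{:,i}^{(K)}} = (s_{v,i} - y_{v,i})\,\mathbf{h}_v^{(K)} + \lambda^{(K)}\mathbf{W}_{:,i}^{(K)}.
\end{equation}
Summing the cross-entropy contributions over all training nodes (and keeping the regularizer once, as a global term on $\mathbf{W}^{(K)}$), setting the aggregated derivative to zero, and isolating $\mathbf{W}_{:,i}^{(K)}$ yields
\begin{equation}
\lambda^{(K)}\,\mathbf{W}_{:,i}^{(K)} \;=\; \sum_{v}(y_{v,i}-s_{v,i})\,\mathbf{h}_v^{(K)}.
\end{equation}
Next I would split the sum on the right-hand side according to the label: for $u$ with $y_{u,i}=1$ the coefficient becomes $(1-s_{u,i})$, while for $v$ with $y_{v,i}\neq 1$ it becomes $-s_{v,i}$. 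Dividing by $\lambda^{(K)}>0$ gives exactly the claimed expression for $(\mathbf{W}_{:,i}^{(K)})^{*}$.

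The main obstacle is conceptual rather than computational: the formula is not a genuine closed form because $s_{u,i}$ (and, through the forward pass, $\mathbf{h}_v^{(K)}$) depend on $\mathbf{W}^{(K)}$ itself. I would therefore be careful to frame the result as a first-order stationarity/fixed-point characterization of the optimum, treating $\{\mathbf{h}_v^{(K)}\}$ as given quantities evaluated at the optimum and $\{s_{u,i}\}$ as the corresponding softmax outputs. With this interpretation the derivation above is rigorous, and the expression suffices for all downstream uses in the paper (Proposition~\ref{thm:2} and Eq.~(\ref{eq:8})), which only need $(\mathbf{W}_{:,i}^{(K)})^{*}$ in terms of the final-layer representations.
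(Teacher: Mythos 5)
Your proposal is correct and follows essentially the same route as the paper's own proof in Appendix~\ref{ap:thm_closedfrom}: reuse the per-node gradient $(s_{v,i}-y_{v,i})\mathbf{h}_v^{(K)}+\lambda^{(K)}\mathbf{W}_{:,i}^{(K)}$ from Theorem~\ref{thm:1}, set the aggregated derivative to zero, and split the resulting sum by label. If anything, your version is slightly more careful than the paper's --- you count the regularizer once when summing over training nodes (the paper instead zeroes the single-node gradient and then sums the per-node ``solutions''), and you explicitly flag that the result is a stationarity/fixed-point characterization rather than a true closed form, a caveat the paper leaves implicit.
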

\begin{proof}
    
For the one training node $v$, we can obtain the derivative of $\mathbf{W}^{(K)}_{:,i}$ from the section \ref{ap:th1}:

\begin{equation}
\label{eq:31}
\frac{\partial \mathcal{L}_{v}}{\partial \mathbf{W}_{:,i}^{(K)}} = (s_{v,i} - y_{v,i})\mathbf{h}_{v}^{(K)} + \lambda^{(K)} \mathbf{W}_{:,i}^{(K)}.
\end{equation}

Let Eq. (\ref{eq:31}) equal to 0, we can obtain the closed-form solution $(\mathbf{W}_{:,i}^{(K)})^{*}$ of Eq. (\ref{eq:4}), \ie
\begin{equation}
   (\mathbf{W}_{:,i}^{(K)})^{*}  = -\frac{1}{\lambda^{(K)}} (s_{v,i} - y_{v,i})\mathbf{h}_{v}^{(K)}
\end{equation}

We can extend it to the case of multiple training nodes as follows:

\begin{equation}
\begin{aligned}
(\mathbf{W}_{:,i}^{(K)})^{*}  & = -\sum_{u\in \mathcal{T}} \frac{1}{\lambda^{(K)}} (s_{u,i} - y_{u,i})\mathbf{h}_{u}^{(K)}\\
& = \frac{1}{\lambda^{(K)}}(\sum_{u:y_{u,i}=1}(1-s_{u,i})\mathbf{h}_{u}^{(K)} - \sum_{v:y_{v,i}\ne 1} s_{v,i}\mathbf{h}_{v}^{(K)})
\end{aligned}
\end{equation}
\end{proof}

\subsection{Proof for Proposition \ref{thm:2}}
\label{ap:thm2}

\begin{proposition}
Given two different coefficients $\lambda_{1},\lambda_{2}$ of weight decay on the final-layer parameters (\ie $\mathbf{W}^{(K)}$), for any $i,j \in [1,\dots, c]$, $i\ne j$, if $\lambda_{1} > \lambda_{2}$, the following equation holds:
\begin{equation}
||((\mathbf{W}^{(K)}_{:,i})^{*}| \lambda_{1})-((\mathbf{W}^{(K)}_{:,j})^{*}|\lambda_{1})||_{2}^{2} < ||((\mathbf{W}^{(K)}_{:,i})^{*}| \lambda_{2})-((\mathbf{W}^{(K)}_{:,j})^{*}| \lambda_{2})||_{2}^{2},
\end{equation}
where $((\mathbf{W}^{(K)})^{*}| \lambda_{1})$ and $((\mathbf{W}^{(K)})^{*}| \lambda_{2})$ are the parameters updated under the weight decay with coefficients $\lambda_{1}$ and $\lambda_{2}$, respectively.
\end{proposition}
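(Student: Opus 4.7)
The plan is to reduce the squared distance between class-centroid columns of $(\mathbf{W}^{(K)})^{*}$ to a quantity that factors cleanly as $1/\lambda^{2}$ times a $\lambda$-independent norm, and then compare the two sides. First, I would apply Theorem~\ref{thm:closed_from} to rewrite both $(\mathbf{W}^{(K)}_{:,i})^{*}$ and $(\mathbf{W}^{(K)}_{:,j})^{*}$ under the two weight-decay regimes. Concretely, I would define the ``raw centroid'' vector
\begin{equation}
\vc_{i} \;=\; \sum_{u:y_{u,i}=1}(1-s_{u,i})\mathbf{h}_{u}^{(K)} \;-\; \sum_{v:y_{v,i}\neq 1} s_{v,i}\mathbf{h}_{v}^{(K)},
\end{equation}
so that Theorem~\ref{thm:closed_from} gives $(\mathbf{W}^{(K)}_{:,i})^{*}\mid\lambda \;=\; \vc_{i}/\lambda$. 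The same identity holds for index $j$, and crucially the vector $\vc_{i}-\vc_{j}$ is a fixed quantity determined by the (final-layer) node representations and softmax outputs on the training set.

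Next, I would directly expand
\begin{equation}
\big\|((\mathbf{W}^{(K)}_{:,i})^{*}\mid\lambda)-((\mathbf{W}^{(K)}_{:,j})^{*}\mid\lambda)\big\|_{2}^{2} \;=\; \frac{1}{\lambda^{2}}\big\|\vc_{i}-\vc_{j}\big\|_{2}^{2}.
\end{equation}
Writing this for $\lambda=\lambda_{1}$ and $\lambda=\lambda_{2}$ and taking the ratio, the common factor $\|\vc_{i}-\vc_{j}\|_{2}^{2}$ cancels, reducing the claim to the elementary inequality $1/\lambda_{1}^{2} < 1/\lambda_{2}^{2}$, which follows immediately from $\lambda_{1}>\lambda_{2}>0$. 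Under the mild non-degeneracy condition $\vc_{i}\neq \vc_{j}$ (i.e.\ the two classes have distinct training-side centroids, which holds whenever the features are not collinear across classes), the inequality is strict, as required.

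The main obstacle is that $\vc_{i}$ is not literally independent of $\lambda$: both $s_{u,i}$ and $\mathbf{h}_{u}^{(K)}$ are produced by a network whose remaining weights have themselves been trained, so in a fully-coupled analysis they could drift with $\lambda$. To make the argument airtight, I would clarify that Theorem~\ref{thm:closed_from} treats the representations $\{\mathbf{h}_{u}^{(K)}\}$ and the resulting softmax scores $\{s_{u,i}\}$ as the upstream inputs to the final-layer optimization problem, so the comparison between $\lambda_{1}$ and $\lambda_{2}$ is made with the same upstream features held fixed; this is the natural reading, consistent with how $(\mathbf{W}^{(K)}_{:,i})^{*}$ is derived. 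Under this interpretation the scaling $\vc_{i}/\lambda$ is exact, and the $1/\lambda^{2}$ comparison yields the proposition. If one instead wished to account for feedback through the hidden layers, a perturbation argument (showing the induced change in $\vc_{i}-\vc_{j}$ is $o(1)$ relative to the $1/\lambda^{2}$ factor) would suffice, but this extension is not needed for the statement as written.
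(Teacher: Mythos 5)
Your proposal follows essentially the same route as the paper's proof: invoke the closed-form solution of Theorem~\ref{thm:closed_from}, factor the column difference as $\frac{1}{\lambda}(\vc_i-\vc_j)$ with a $\lambda$-independent residual, and reduce the claim to $1/\lambda_1^2<1/\lambda_2^2$. If anything you are slightly more careful than the paper, which drops the $j$-indexed term when expanding the difference (a harmless typo) and does not state the non-degeneracy condition $\vc_i\neq\vc_j$ or the implicit assumption that the upstream representations are held fixed across the two values of $\lambda$ --- both of which you correctly flag.
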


\begin{proof}
The closed-from solution of $\mathbf{W}^{(K)}$ can be obtained from Theorem \ref{thm:closed_from}:
\begin{equation}
((\mathbf{W}^{(K)})^{*}| \lambda)  = \frac{1}{\lambda}(\sum_{u:y_{u,i}=1}(1-s_{u,i})\mathbf{h}_{u}^{(K)} - \sum_{v:y_{v,i}\ne 1} s_{v,i}\mathbf{h}_{v}^{(K)})
\end{equation}

Then we have
\begin{equation}
\begin{aligned}
||((\mathbf{W}'^{(K)}_{:,i})^{*} | \lambda_{1})-((\mathbf{W}'^{(K)}_{:,j})^{*} | \lambda_{1})||_{F}^{2} &= ||\frac{1}{\lambda_{1}}(\sum_{u:y_{u,i}=1}(1-s_{u,i})\mathbf{h}_{u}^{(K)} - \sum_{v:y_{v,i}\ne 1} s_{v,i}\mathbf{h}_{v}^{(K)})||_{F}^{2}\\
&= \frac{1}{\lambda^{2}_{1}}||\sum_{u:y_{u,i}=1}(1-s_{u,i})\mathbf{h}_{u}^{(K)} - \sum_{v:y_{v,i}\ne 1} s_{v,i}\mathbf{h}_{v}^{(K)}||_{F}^{2}
\end{aligned}.
\end{equation}

The same as $||((\mathbf{W}'^{(K)}_{:,i})^{*} | \lambda_{2})-((\mathbf{W}'^{(K)}_{:,j})^{*} | \lambda_{2})||_{F}^{2}$:
\begin{equation}
||((\mathbf{W}'^{(K)}_{:,i})^{*} | \lambda_{2})-((\mathbf{W}'^{(K)}_{:,j})^{*} | \lambda_{2})||_{F}^{2} = \frac{1}{\lambda^{2}_{2}}||\sum_{u:y_{u,i}=1}(1-s_{u,i})\mathbf{h}_{u}^{(K)} - \sum_{v:y_{v,i}\ne 1} s_{v,i}\mathbf{h}_{v}^{(K)}||_{F}^{2}.
\end{equation}

Therefore, we can have:
\begin{equation}
\begin{aligned}
&||((\mathbf{W}'^{(K)}_{:,i})^{*} | \lambda_{1})-((\mathbf{W}'^{(K)}_{:,j})^{*} | \lambda_{1})||_{F}^{2} - ||((\mathbf{W}'^{(K)}_{:,i})^{*} | \lambda_{2})-((\mathbf{W}'^{(K)}_{:,j})^{*} | \lambda_{2})||_{F}^{2}  \\
&=\frac{1}{\lambda^{2}_{1}}||\sum_{u:y_{u,i}=1}(1-s_{u,i})\mathbf{h}_{u}^{(K)} - \sum_{v:y_{v,i}\ne 1} s_{v,i}\mathbf{h}_{v}^{(K)}||_{F}^{2} -  \frac{1}{\lambda^{2}_{2}}||\sum_{u:y_{u,i}=1}(1-s_{u,i})\mathbf{h}_{u}^{(K)} - \sum_{v:y_{v,i}\ne 1} s_{v,i}\mathbf{h}_{v}^{(K)}||_{F}^{2}\\
&= (\frac{1}{\lambda^{2}_{1}} - \frac{1}{\lambda^{2}_{2}})||\sum_{u:y_{u,i}=1}(1-s_{u,i})\mathbf{h}_{u}^{(K)} - \sum_{v:y_{v,i}\ne 1} s_{v,i}\mathbf{h}_{v}^{(K)}||_{F}^{2}\\
& < 0.
\end{aligned}
\end{equation}

Thus, we obtain $||((\mathbf{W}'^{(K)}_{:,i})^{*} | \lambda_{1})-((\mathbf{W}'^{(K)}_{:,j})^{*} | \lambda_{1})||_{F}^{2} < ||((\mathbf{W}'^{(K)}_{:,i})^{*} | \lambda_{2})-((\mathbf{W}'^{(K)}_{:,j})^{*} | \lambda_{2})||_{F}^{2}$.

\end{proof}

% \subsection{Proof for Additional Theorem}

% \begin{theorem}
%     Given two hyperparameters $\alpha_{1}$ and $\alpha_{2}$ in Eq. (\ref{eq:10}), if $\alpha_{1} > \alpha_{2}$, the following equation holds:

% \end{theorem}

\section{Additional Methodological Details}

\subsection{Schematic of Node-Level Calibration}
\label{ap:NLC}

\begin{figure}[htpb]
	\centering
	%scale=0.14
\includegraphics[width=0.7\columnwidth]{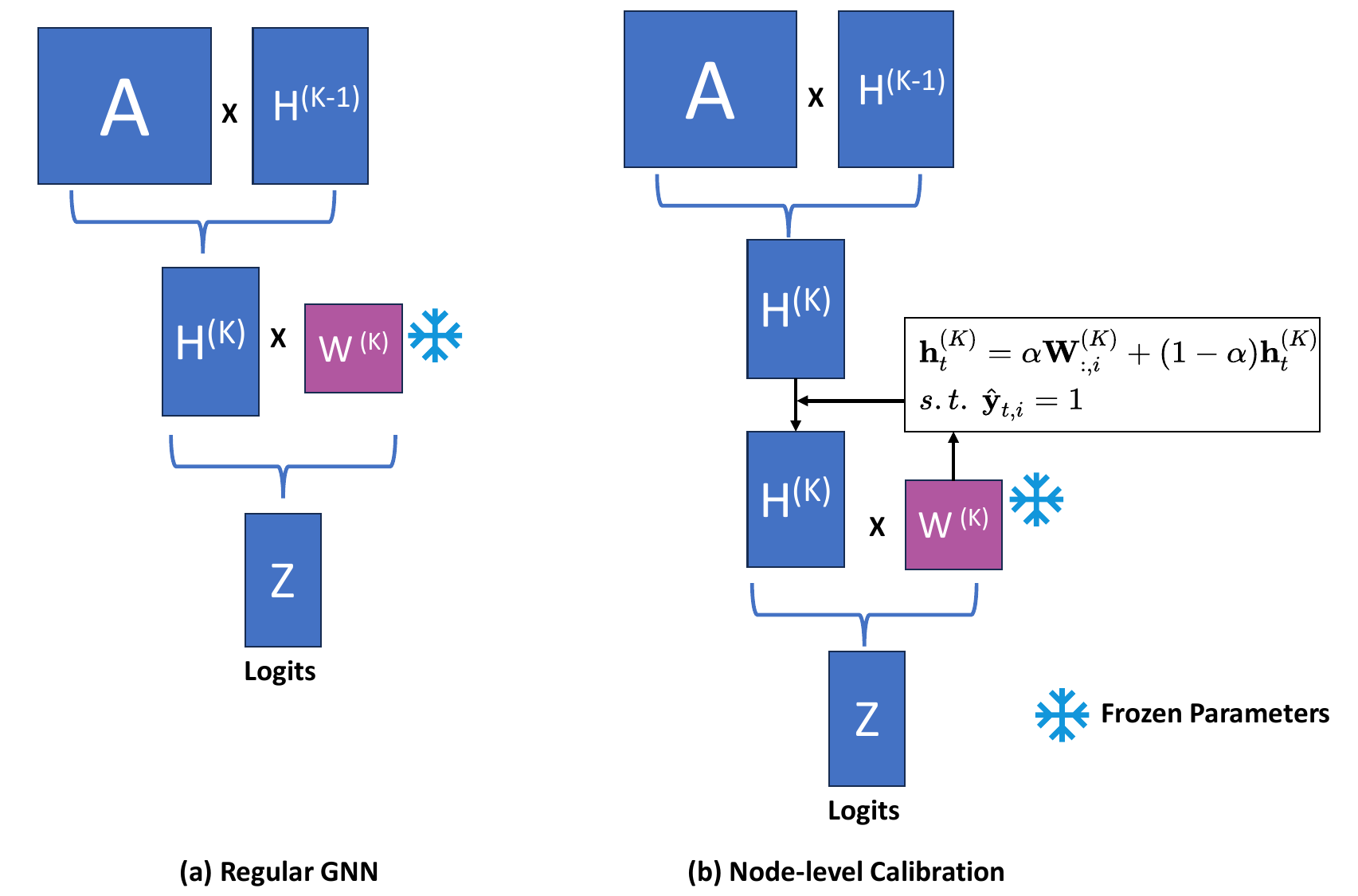}
 \caption{Illustration of the proposed node-level calibration in the final layer during test time.}
	\label{fig:NLC}	
\end{figure}

The Schematic of Node-Level Calibration is shown in Figure \ref{fig:NLC}. The proposed node-level calibration introduces only a lightweight adjustment to the final-layer representation $\mathbf{H}^{(K)}$, guided by the final-layer parameters $\mathbf{W}^{(K)}$ and predicted labels.

\section{Experimental Settings}
\label{ap:exp_set}
This section provides detailed experimental settings in Section \ref{sec:exp}, including the description of all datasets in Section \ref{ap:dataset}, summarization of all comparison methods in Section \ref{ap:baseline}, model architectures and settings in Section \ref{ap:modela}, and the evaluation protocol in Section \ref{ap:eva}.

\subsection{Datasets}
\label{ap:dataset}
We follow the previous works \citep{CaGCN} choose the commonly used Cora \citep{citation}, Citeseer \citep{citation}, Pubmed \citep{citation}, and CoraFull \citep{corafull} for evaluation, where re composed of papers as
nodes and their relationships such as citation relationships, and common authoring. Node
feature is a one-hot vector that indicates whether a word is present in that paper. Words with
a frequency of less than 10 are removed. We choose 500 nodes for validation and 1000 for testing and select three label rates for the training set (\ie 20, 40, and 60 labeled nodes per class). The details of these datasets are summarized in Table \ref{tb:dataintroduce}. All experiments setting are performed following the official code \citep{CaGCN}.

\begin{table*}[htbp]
 	\centering
 	\caption{The statistics of the datasets}
 	\label{tb:dataintroduce}
 	\begin{tabular}{cccccccc}
 		\toprule
 		\textbf{Datasets} & \textbf{Nodes} & \textbf{Edges} & \textbf{Train} &
        \textbf{Valid} &
        \textbf{Test Nodes} & \textbf{Features} & \textbf{Classes} \\
 		\midrule
 		Cora & 2,708 & 5,429 & 140/280/420 &500&1000 & 1,433 & 7 \\
 		Citeseer & 3,327 & 4,732 & 120/124/360 & 500 & 1,000 & 3,703 & 6 \\
 		Pubmed & 19,717 & 44,338 & 60/120/360 & 500 & 1,000 & 500 & 3 \\
            CoraFull & 19,793 &  126,842 & 1400/2800/4200 & 500 & 1000 & 8,710 & 70 \\
 		\bottomrule
 	\end{tabular}
 \end{table*}

\subsection{Comparison Methods}
\label{ap:baseline}
This study focuses on the classical semi-supervised node classification task. To evaluate model calibration, we follow the most authoritative work \citep{CaGCN} that considers two widely-used GNN architectures: GCN \citep{gcn_kipf} and GAT \citep{gat}.

GCN is the most traditional undirected graph neural network, which updates node representations by aggregating neighboring formations. GAT is a graph neural network model that extends the self-
attention mechanism to the graph, and updates nodes’ representation
by attention score to aggregate neighbors.

We compare several post-hoc calibration methods commonly applied to neural networks, including Temperature Scaling (TS) \citep{TS} and Matrix Scaling (MS) \citep{MS}.
Furthermore, the state-of-the-art graph calibration models are also compared: 
\begin{itemize}
    \item \textbf{CaGCN} \citep{CaGCN}, the first calibration method designed specifically for GNNs, which employs a calibration model to scale the
    confidence of every node after training the mode.

    \item \textbf{GATS} \citep{GATS} is the same as CaGCN which both learn instance-wise temperatures based on a heuristic formula. 

    \item \textbf{AU-LS} \citep{AULS} is a regularization method for calibrating GNNs, utilizing a reverse label smoothing objective function to enhance the confidence of GNN predictions. We employ the 2-layer GNN variant, as the accuracy of the 3-layer GNN is significantly lower, making further discussion unnecessary. Additionally, the accuracy of the 2-layer AULS variants is also reported in Table \ref{ap:acc}.

    \item \textbf{DCGC} \citep{DGC} is a data-centric calibration method, that learns a homophily graph for better calibration. In the experimental results, we report the results of the best DCGC variant GATS-DCGC.
\end{itemize}

\begin{table*}[htbp]
\centering
\caption{Settings for the proposed SCAR based on GCN.}
\label{tb:hyper-para1}
\resizebox{1.0\columnwidth}{!}{
\begin{tabular}{ccccccccc}
    \toprule
    \textbf{Datasets} & \textbf{L/C} & Lr & Weight decay & Hidden units & Dropout & Weight decay of $\mathbf{W}^{(K)}$ & $\alpha$ & $\beta$ \\
    \midrule
    \multirow{3}*{Cora} & 20 & 0.015  & 5e-4 & 64 & 0.6 & 1e-4 &  2e-4 & 2e-3\\
    ~ & 40 & 0.015  & 5e-4 & 64 & 0.6 & 1e-4 &  1e-4 & 1e-3\\
    ~ & 60 & 0.015  & 5e-4 & 64 & 0.6 & 1e-4 &  1e-4 & 1e-3\\
    
  \midrule
  \multirow{3}*{Citeseer} & 20 & 0.01  & 5e-4 & 64 & 0.5 & 4.5e-4 &  5e-5 & 5e-3\\
  ~ & 40 & 0.01  & 5e-4 & 64 & 0.5 & 4.5e-4 &  5e-5 & 5e-3\\
  ~ & 60 & 0.01  & 5e-4 & 64 & 0.5 & 4e-4 &  5e-4 & 2e-3\\
\midrule
  \multirow{3}*{Pubmed} & 20 & 0.02  & 5e-4 & 16 & 0.5 & 4e-4 &  5e-8 & 5e-6\\
  ~ & 40 & 0.02  & 5e-4 & 16 & 0.5 & 4e-4 &  5e-7 & 5e-5\\
  ~ & 60 & 0.02  & 5e-4 & 16 & 0.5 & 4e-4 &  5e-7 & 5e-5\\

\midrule
  \multirow{3}*{CoraFull} & 20 & 0.01  & 5e-4 & 64 & 0.5 & 2e-6 &  3e-5 & 5e-4\\
  ~ & 40 & 0.01  & 5e-4 & 64 & 0.5 & 1e-4 &  3e-4 & 5e-3\\
  ~ & 60 & 0.01  & 5e-4 & 64 & 0.5 & 2e-6 &  3e-5 & 5e-4\\
    
    \bottomrule
\end{tabular}
}
\end{table*}

\subsection{Model Architectures and Settings}
\label{ap:modela}

In our experiments, we follow \citep{CaGCN}, adopting a two-layer configuration for GCN and GAT, with hidden layer dimensions selected from \{8, 16, 64\}. All parameters are optimized using the Adam optimizer \citep{adam} with the learning rate selected from \{0.01, 0.015, 0.02\}, dropout selected from \{0.5, 0.6\}, and the weight decay for all layers is set to 0.0005. In our proposed model, the weight decay of the final-layer parameter $\mathbf{W}^{(K)}$ is set to smaller than 0.0005 (used for the other layers' parameters). the hyper-parameters $\alpha, \beta$ are selected from (0.0005, 0.000005] and $\alpha < \beta$.  Table \ref{tb:hyper-para1} and Table \ref{tb:hyper-para2} describe the detailed settings and architecture for our experimental setups with SCAR based on GCN and GAT backbones, respectively.

\begin{table*}[htbp]
\centering
\caption{Settings for the proposed SCAR based on GAT.}
\label{tb:hyper-para2}
\resizebox{1.0\columnwidth}{!}{
\begin{tabular}{ccccccccc}
    \toprule
    \textbf{Datasets} & \textbf{L/C} & Lr & Weight decay & Hidden units & Dropout & Weight decay of $\mathbf{W}^{(K)}$ & $\alpha$ & $\beta$ \\
    \midrule
    \multirow{3}*{Cora} & 20 & 0.01  & 5e-4 & 8 & 0.5 & 2e-5 & 5e-6 & 4e-4\\
    ~ & 40 & 0.01  & 5e-4 & 8 & 0.5 & 2e-5 & 5e-6 & 4e-4\\
    ~ & 60 & 0.01  & 5e-4 & 8 & 0.5 & 2e-5 & 5e-6 & 4e-4\\
    
  \midrule
  \multirow{3}*{Citeseer} & 20 & 0.01  & 5e-4 & 8 & 0.6 & 3e-4 & 5e-4 & 5e-3\\
  ~ & 40 & 0.01  & 5e-4 & 8 & 0.6 & 3e-4 & 5e-4 & 5e-3\\
  ~ & 60 & 0.01  & 5e-4 & 8 & 0.6 & 3e-4 & 5e-4 & 5e-3\\
\midrule
  \multirow{3}*{Pubmed} & 20 & 0.01  & 5e-4 & 8 & 0.6 & 3e-4 &  5e-6 & 5e-5\\
  ~ & 40 & 0.01  & 5e-4 & 6 & 0.5 & 3e-4 &  5e-7 & 5e-5\\
  ~ & 60 & 0.01  & 5e-4 & 8 & 0.5 & 3e-4 &  5e-7 & 5e-6\\

\midrule
  \multirow{3}*{CoraFull} & 20 & 0.01  & 5e-4 & 8 & 0.6 & 2e-6 &  3e-5 & 5e-4\\
  ~ & 40 & 0.01  & 5e-4 & 8 & 0.6 & 2e-6 &  3e-5 & 5e-4\\
  ~ & 60 & 0.01  & 5e-4 & 8 & 0.6 & 2e-6 &  3e-5 & 5e-4\\
    
    \bottomrule
\end{tabular}
}
\end{table*}

\subsection{Evaluation Protocol}
\label{ap:eva}
We follow the evaluation in previous works \citep{CaGCN}, adopting Expected Calibration Error (ECE) with 20 bins as the metric for calibration. Besides, since fine-tuning the hyperparameter of the weight decay in the final layer, the predicted label may have slight changes, we also adopt classification accuracy as an evaluation metric. We report the average and standard deviation of 10 runs for each pair split of a dataset.

\subsection{Computing Resource Details}
\label{ap:comp_resourse}

All experiments were implemented in PyTorch and conducted on a server with 8 NVIDIA GeForce 4090 (24 GB memory each). Almost every experiment can be done on an individual 4090, and the training time of all comparison methods as well as our method, is less than 1 hour.

\section{Additional Experiments}
\label{ap:add_exp}

\begin{figure*}[t!]
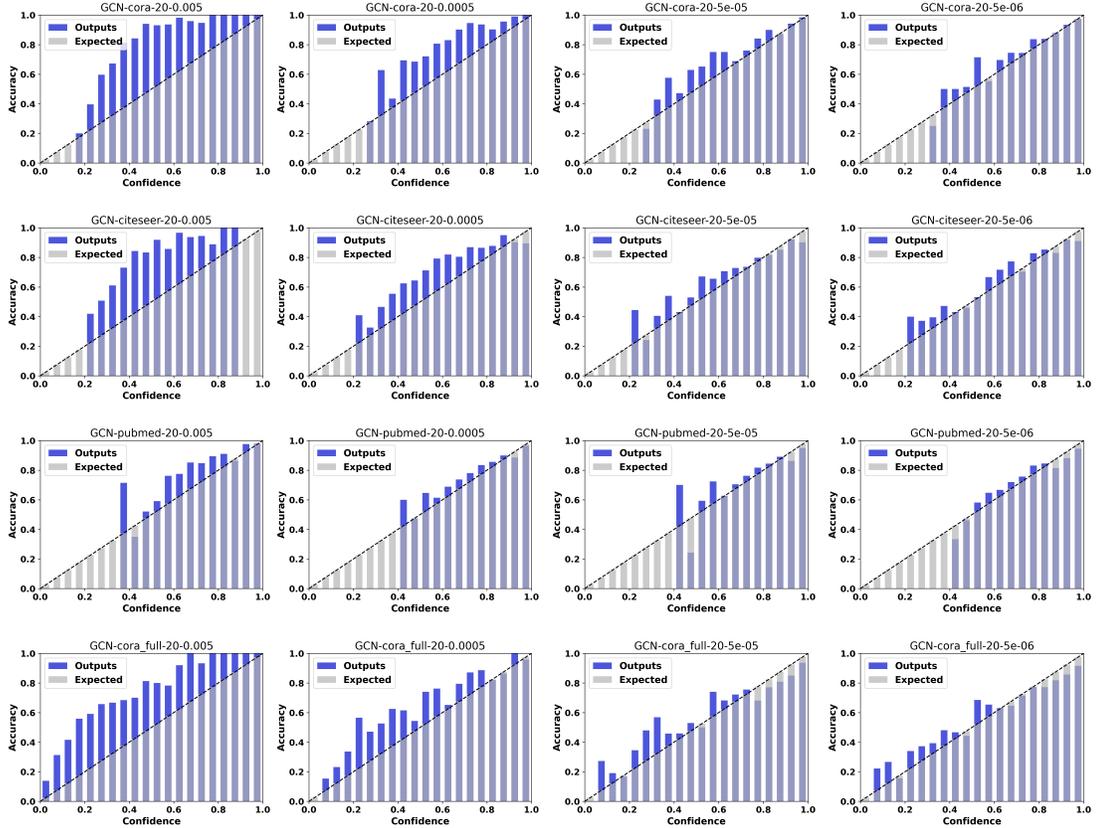

	% \vspace{-0.35cm}
	%\centering
	\subfigure{
		\begin{minipage}[t]{0.25\linewidth} % 0.5
			\centering
			\includegraphics[scale=0.25]{GCN-cora-20-0.005.pdf} %0.32
			%\caption{fig1}
		\end{minipage}%
	}\subfigure{
		\begin{minipage}[t]{0.25\linewidth}
			%\centering
			\includegraphics[scale=0.25]{GCN-cora-20-0.0005.pdf}
			%\caption{fig2}
		\end{minipage}
    }\subfigure{
		\begin{minipage}[t]{0.25\linewidth}
			%\centering
			\includegraphics[scale=0.25]{GCN-cora-20-5e-05.pdf}
			%\caption{fig2}
		\end{minipage}
    }\subfigure{
		\begin{minipage}[t]{0.25\linewidth}
			%\centering
			\includegraphics[scale=0.25]{GCN-cora-20-5e-06.pdf}
			%\caption{fig2}
		\end{minipage}
    }

	\subfigure{
		\begin{minipage}[t]{0.25\linewidth} % 0.5
			\centering
			\includegraphics[scale=0.25]{GCN-citeseer-20-0.005.pdf} %0.32
			%\caption{fig1}
		\end{minipage}%
	}\subfigure{
		\begin{minipage}[t]{0.25\linewidth}
			%\centering
			\includegraphics[scale=0.25]{GCN-citeseer-20-0.0005.pdf}
			%\caption{fig2}
		\end{minipage}
    }\subfigure{
		\begin{minipage}[t]{0.25\linewidth}
			%\centering
			\includegraphics[scale=0.25]{GCN-citeseer-20-5e-05.pdf}
			%\caption{fig2}
		\end{minipage}
    }\subfigure{
		\begin{minipage}[t]{0.25\linewidth}
			%\centering
			\includegraphics[scale=0.25]{GCN-citeseer-20-5e-06.pdf}
			%\caption{fig2}
		\end{minipage}
    }

	\subfigure{
		\begin{minipage}[t]{0.25\linewidth} % 0.5
			\centering
	       \includegraphics[scale=0.25]{GCN-pubmed-20-0.005.pdf} %0.32
			%\caption{fig1}
		\end{minipage}%
	}\subfigure{
		\begin{minipage}[t]{0.25\linewidth}
			%\centering
			\includegraphics[scale=0.25]{GCN-pubmed-20-0.0005.pdf}
			%\caption{fig2}
		\end{minipage}
    }\subfigure{
		\begin{minipage}[t]{0.25\linewidth}
			%\centering
			\includegraphics[scale=0.25]{GCN-pubmed-20-5e-05.pdf}
			%\caption{fig2}
		\end{minipage}
    }\subfigure{
		\begin{minipage}[t]{0.25\linewidth}
			%\centering
			\includegraphics[scale=0.25]{GCN-pubmed-20-5e-06.pdf}
			%\caption{fig2}
		\end{minipage}
    }

    	\subfigure{
		\begin{minipage}[t]{0.25\linewidth} % 0.5
			\centering
	       \includegraphics[scale=0.25]{GCN-cora_full-20-0.005.pdf} %0.32
			%\caption{fig1}
		\end{minipage}%
	}\subfigure{
		\begin{minipage}[t]{0.25\linewidth}
			%\centering
			\includegraphics[scale=0.25]{GCN-cora_full-20-0.0005.pdf}
			%\caption{fig2}
		\end{minipage}
    }\subfigure{
		\begin{minipage}[t]{0.25\linewidth}
			%\centering
			\includegraphics[scale=0.25]{GCN-cora_full-20-5e-05.pdf}
			%\caption{fig2}
		\end{minipage}
    }\subfigure{
		\begin{minipage}[t]{0.25\linewidth}
			%\centering
			\includegraphics[scale=0.25]{GCN-cora_full-20-5e-06.pdf}
			%\caption{fig2}
		\end{minipage}
    }

	\caption{Visualization of Reliability diagrams. The base model is GCN and L/C=20.} 
	\label{fig:wd}
\end{figure*}

\begin{figure*}[t!]
	% \vspace{-0.35cm}
	%\centering
	\subfigure[Cora]{
		\begin{minipage}[t]{0.5\linewidth} % 0.5
			\centering
			\includegraphics[scale=0.25]{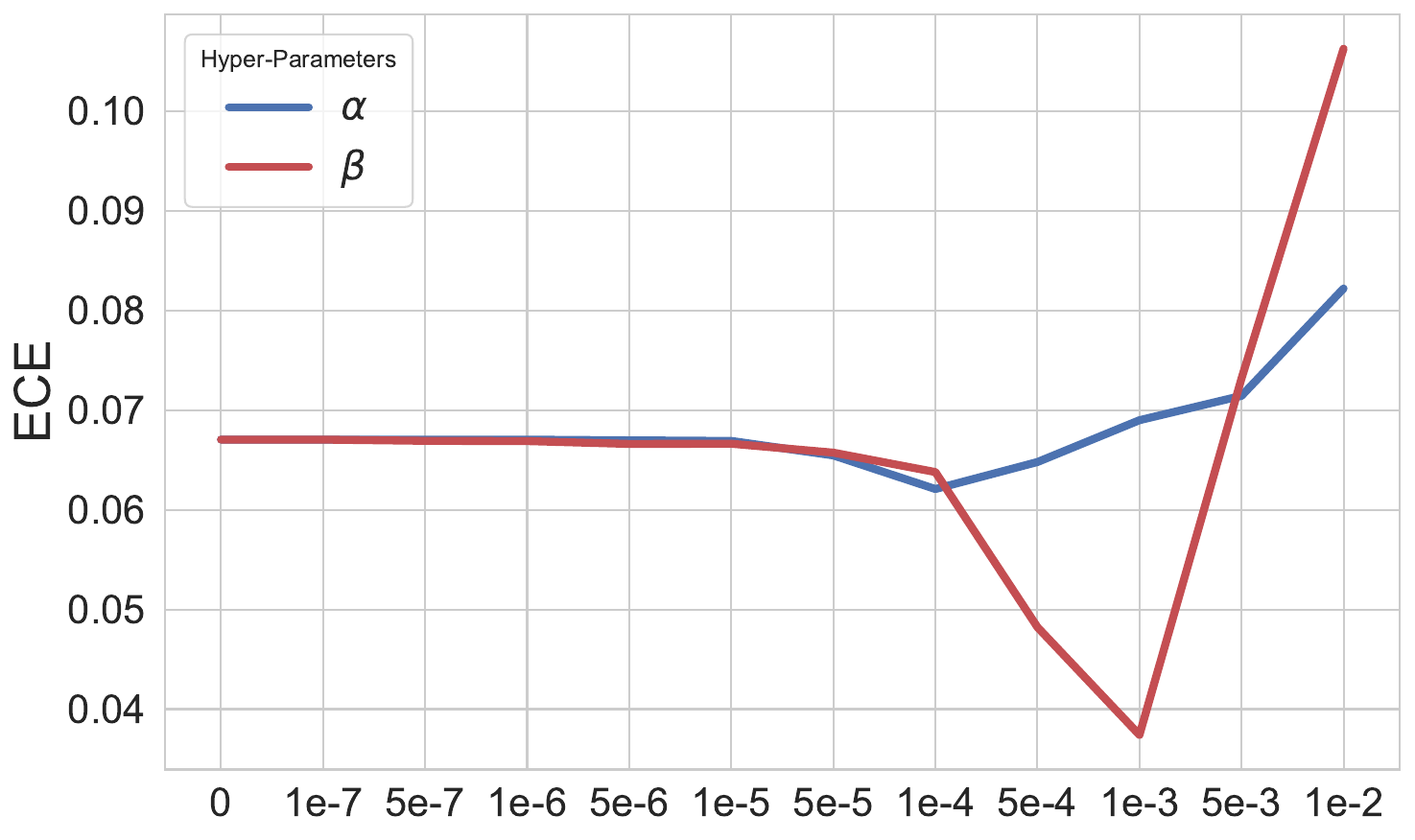} %0.32
			%\caption{fig1}
		\end{minipage}%
	}\subfigure[Citeseer]{
		\begin{minipage}[t]{0.5\linewidth}
			%\centering
			\includegraphics[scale=0.25]{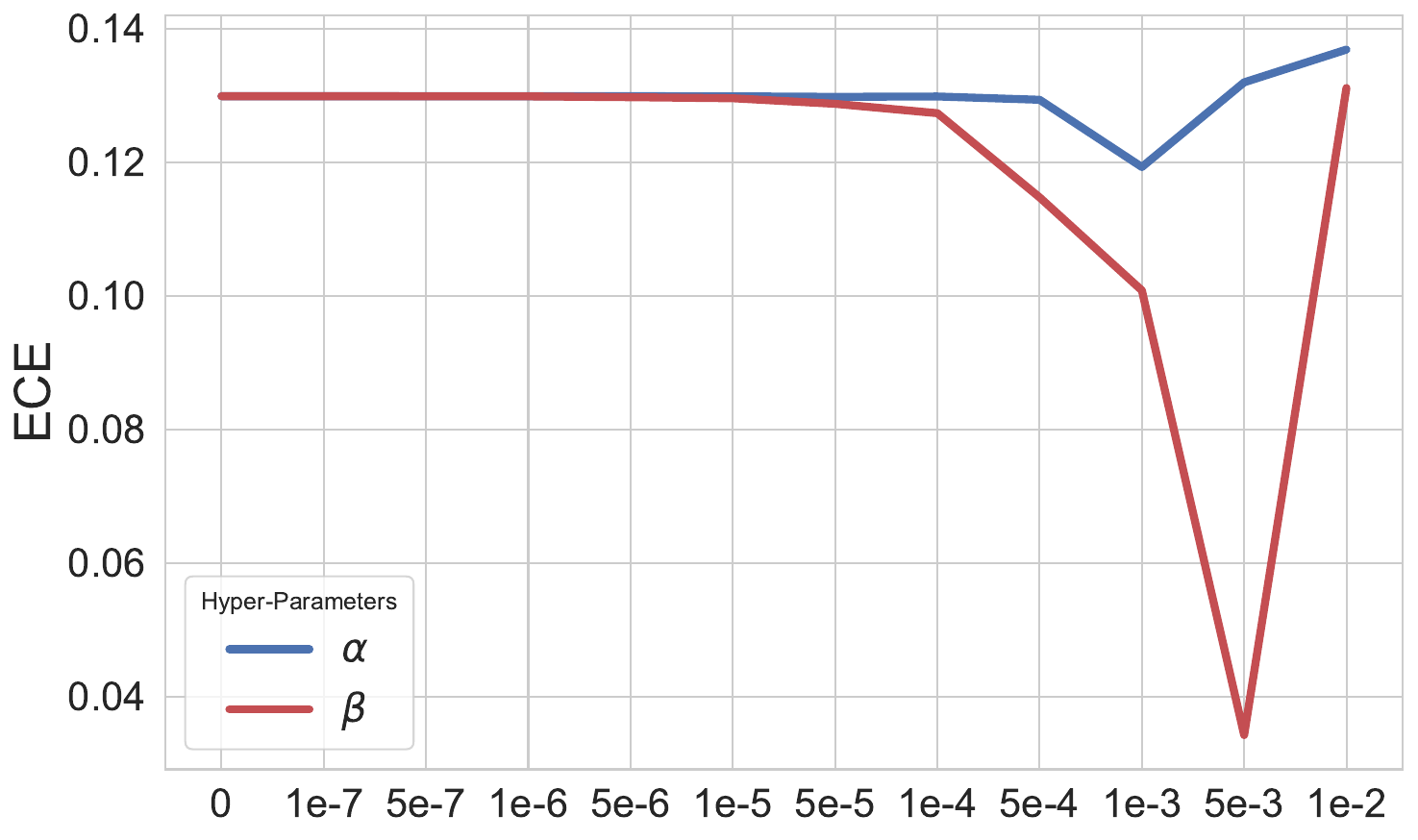}
			%\caption{fig2}
		\end{minipage}
    }
    \subfigure[Pubmed]{
		\begin{minipage}[t]{0.5\linewidth}
			\centering
			\includegraphics[scale=0.25]{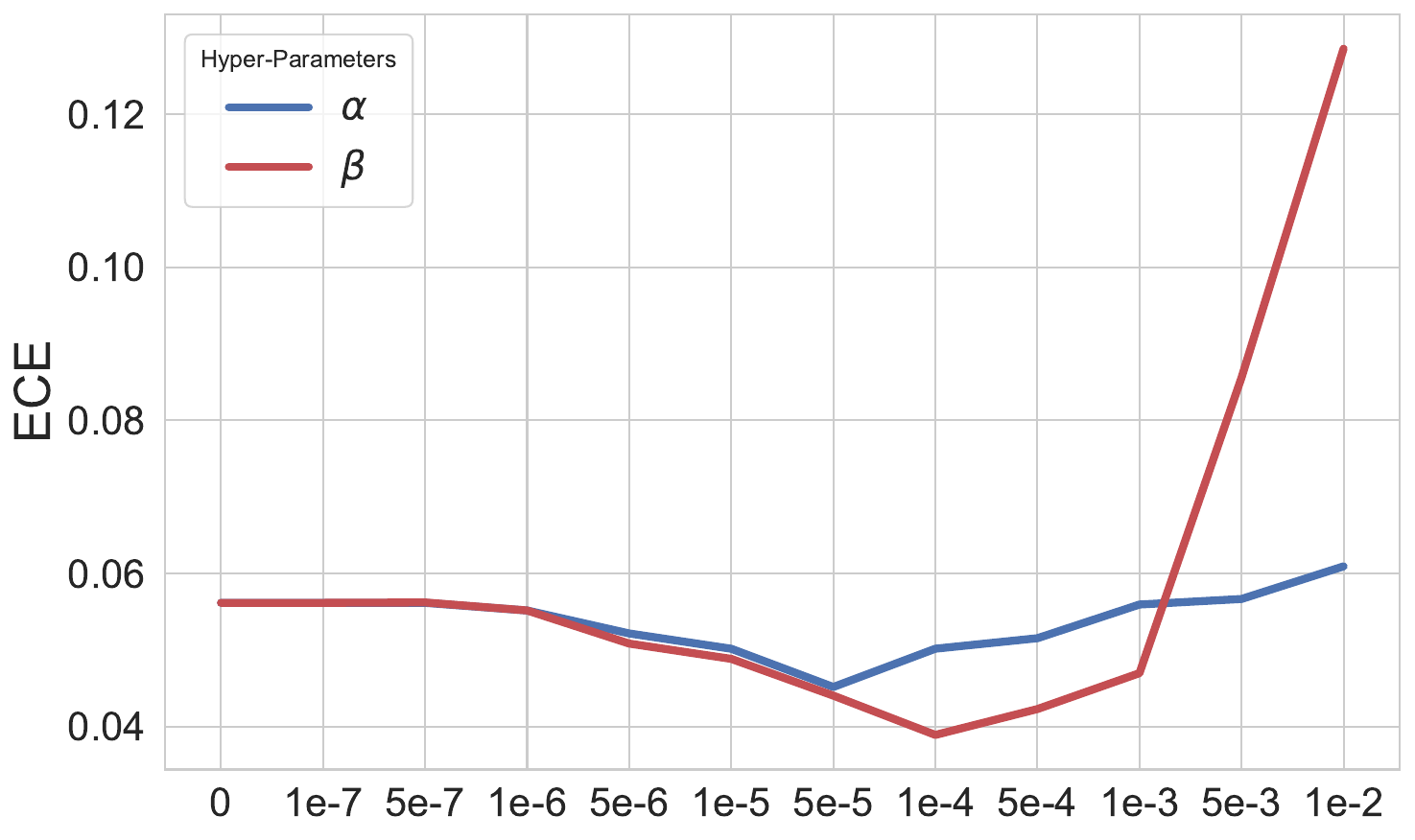}
			%\caption{fig2}
		\end{minipage}
    }\subfigure[CoraFull]{
		\begin{minipage}[t]{0.5\linewidth}
			%\centering
			\includegraphics[scale=0.25]{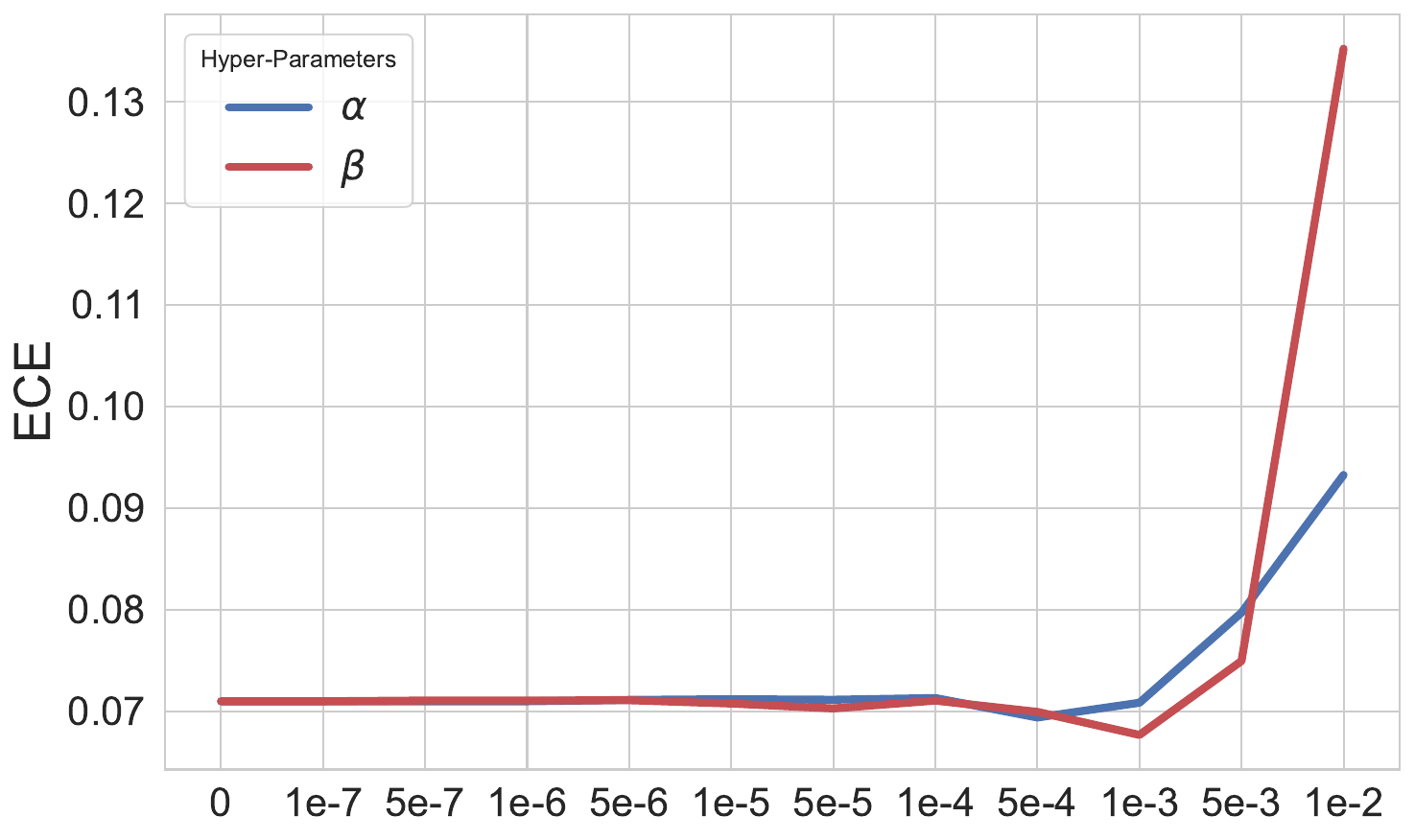}
			%\caption{fig2}
		\end{minipage}
    }  \caption{Visualization of the impact of defferent $\alpha$ and $\beta$ on ECE and red line represnet $\beta$, the blue line represent $\alpha$. The base model is GCN and L/C=20.} 
	\label{fig:hyper_ab}
\end{figure*}

\subsection{Analysis of Hyper-parameters}
\label{ap:lambda}
In the proposed method, SCAR, we theoretically prove that the weight decay of the final layer leads to exacerbating the under-confidence of GNNs, thus we propose to reduce the final-layer weight decay to avoid the under-confidence phenomenon. Moreover, we employ the non-negative parameters (\ie $\alpha$ and $\beta$) to control the similarity between cluster centroid and test nodes, where $\alpha$ controls the test nodes adjacent to the training nodes, and $\beta$ controls the remaining test nodes. 

To investigate the impact of weight decay of the final layer, we conduct node classification on all datasets with L/C=20 by varying the value of the final-layer weight decay in the range of [0.005, 0.0005, 0.00005, 0.000005] (from left to right) and report the results in Figure \ref{fig:wd}. As shown in Figure \ref{fig:wd}, it is clearly observed that as the weight decay of the final layer decreases, the model transitions from being under-confident to confident, and its calibration performance improves progressively. This further validates the results of our Theorem \ref{thm:1} and highlights the importance of adjusting the weight decay of the final layer for improving confidence. For the selection of the final-layer weight decay, based on this observation, we recommend a binary search strategy, allowing us to efficiently determine the optimal weight decay value.

To investigate the impact of $\alpha$ and $\beta$ with different settings, we conduct the node classification on all datasets with L/C=20 by fixing a hyper-parameter and varying the value of another hyper-parameter (\ie both $\alpha$ and $\beta$) in the range of [0, 0.001]. Note that we have chosen a suitable weight decay parameter for $\mathbf{W}^{(K)}$. The results are reported in Figure \ref{fig:hyper_ab}. From Figure \ref{fig:hyper_ab}, we have the following observations. First, adjusting $\beta$ (\ie the red line) always obtains a lower ECE than $\alpha$ (\ie the blue line). This indicates that improving the similarity between test nodes disconnected from training nodes and their predicted class centroid is more effective than for connected test nodes. Second, the optimal value of $\beta$ is always higher than $\alpha$. This indicates that the test nodes disconnected from training nodes need a larger weight to move closer to the class centroid. The above observations are consistent with our analysis that test nodes farther from the training nodes
are more under-confident. For the selection of $\alpha$ and $\beta$, Figure \ref{fig:hyper_ab} already uggests that $\beta$ should be greater than $\alpha$. Leveraging this insight, we employed a partial grid search (i.e., searching only within the triangular region where 
$\beta > \alpha$), effectively reducing the search space by half while maintaining thorough exploration of meaningful parameter settings.

\subsection{Accuracy Analysis}
\label{ap:acc}
The proposed SCAR framework comprises two components: class-centroid-level calibration and node-level calibration. Notably, since node-level calibration is a post-hoc method, which does not alter the model's predictions, while only class-centroid-level calibration (reducing the weight decay of the final layer) influences the predictions. Although adjusting the weight decay of the final layer is typically considered part of standard hyperparameter tuning, we conducted experiments to evaluate its impact on accuracy compared with the uncalibrated model and regularization method (\ie AU-LS). The results are shown in Table \ref{tb:acc}.

From Table \ref{tb:acc}, we can observe that across all datasets and L/C configurations, SCAR consistently outperforms the uncalibrated models and regularization model (\ie AU-LS) for both GCN and GAT backbones. This demonstrates that reducing the weight decay of the final layer not only achieves a good calibration of the GNNs' confidence but also improves their prediction accuracy. It is worth noting that, especially in the CoraFull dataset, the proposed SCAR on average improves by 2.46 \% and 5.7 \%, compared to GCN and GAT.

\begin{table*}[htbp]
\centering
\caption{Accuracy of node classification (\%), considering various numbers of labels per class (L/C). \emph{Uncal.} represents the uncalibrated model. The best results are highlighted in black.}
\label{tb:acc}
\begin{tabular}{ccccccccc}
    \toprule
    \multirow{2}*{\textbf{Datasets} }& \multirow{2}*{\textbf{L/C} }& \multicolumn{3}{c}{GCN} & \multicolumn{3}{c}{GAT} \\
    \cmidrule(r){3-5}
    \cmidrule(l){6-8}
     & & \textbf{Uncal.} & \textbf{AU-LS} & \textbf{SCAR} & \textbf{Uncal.} & \textbf{AU-LS} & \textbf{SCAR} \\
     
    \cmidrule(r){1-5}
    \cmidrule(l){6-8}
    \multirow{3}*{Cora} & 20 & 81.53\scriptsize $\pm$0.32  & 79.58\scriptsize $\pm$0.14  &  \textbf{82.01\scriptsize $\pm$0.21} & 82.56\scriptsize $\pm$0.31 & 80.53\scriptsize $\pm$1.05  &\textbf{83.45\scriptsize $\pm$0.29} \\
    ~ & 40 & \textbf{83.04\scriptsize $\pm$0.24} & 82.90\scriptsize $\pm$0.69  & 82.70\scriptsize $\pm$0.16 &  \textbf{83.56\scriptsize $\pm$0.25} & 82.28\scriptsize $\pm$0.99  & 83.35\scriptsize $\pm$0.42 \\
    ~ & 60 & 84.35\scriptsize $\pm$0.35 & 84.03\scriptsize $\pm$0.80  &\textbf{84.49\scriptsize $\pm$0.37} & 85.06\scriptsize $\pm$0.31 & 84.55\scriptsize $\pm$0.61  &\textbf{85.07\scriptsize $\pm$0.53} \\
    
  \cmidrule(r){1-5}
  \cmidrule(l){6-8}
  \multirow{3}*{Citeseer} & 20 & \textbf{71.71\scriptsize $\pm$0.32}  & 65.65\scriptsize $\pm$2.91  & 71.61\scriptsize $\pm$0.45 & 71.47\scriptsize $\pm$0.88 & 68.57\scriptsize $\pm$1.01  &\textbf{71.51\scriptsize $\pm$0.60}  \\
  ~ & 40 & 72.18\scriptsize $\pm$0.32 & 70.71\scriptsize $\pm$1.62  & \textbf{72.43\scriptsize $\pm$0.41} & 72.31\scriptsize $\pm$0.56 & 71.55\scriptsize $\pm$0.47  & \textbf{72.37\scriptsize $\pm$0.31} \\
  ~ & 60 & 72.98\scriptsize $\pm$0.25 & 72.91\scriptsize $\pm$0.49  &\textbf{73.01\scriptsize $\pm$0.50} & 72.91\scriptsize $\pm$0.69 & 73.15\scriptsize $\pm$0.38  &\textbf{73.23\scriptsize $\pm$0.51} \\
  
\cmidrule(r){1-5}
\cmidrule(l){6-8}
  \multirow{3}*{Pubmed} & 20 & 79.53\scriptsize $\pm$0.13 & 76.74\scriptsize $\pm$2.61 & \textbf{79.60\scriptsize $\pm$0.40} & \textbf{78.88\scriptsize $\pm$0.31} & 77.92\scriptsize $\pm$0.56  & 78.62\scriptsize $\pm$0.36 \\
  ~ & 40 & 80.57\scriptsize $\pm$0.34 & 78.59\scriptsize $\pm$0.68  &\textbf{80.84\scriptsize $\pm$0.23} & \textbf{80.64\scriptsize $\pm$0.43} & 79.26\scriptsize $\pm$0.30  & 80.21\scriptsize $\pm$0.57 \\
  ~ & 60 & 83.11\scriptsize $\pm$0.24 & 80.51\scriptsize $\pm$0.51  & \textbf{83.24\scriptsize $\pm$0.54} & \textbf{82.27\scriptsize $\pm$0.30} & 80.79\scriptsize $\pm$0.36  & 82.07\scriptsize $\pm$0.37 \\

\cmidrule(r){1-5}
    \cmidrule(l){6-8}
    
  \multirow{3}*{CoraFull} & 20 & 62.03\scriptsize $\pm$0.27  & 61.62\scriptsize $\pm$0.62  &\textbf{64.17\scriptsize $\pm$0.52} & 58.87\scriptsize $\pm$0.35 & 54.71\scriptsize $\pm$0.43  &\textbf{62.91\scriptsize $\pm$0.32}  \\
  ~ & 40 & 65.02\scriptsize $\pm$0.32 & 62.31\scriptsize $\pm$0.67  &\textbf{67.21\scriptsize $\pm$0.46} & 60.08 \scriptsize $\pm$0.30 & 58.39\scriptsize $\pm$0.48  &\textbf{66.17\scriptsize $\pm$0.48} \\
  ~ & 60 & 66.71\scriptsize $\pm$0.33 & 64.12\scriptsize $\pm$0.83  &\textbf{69.77\scriptsize $\pm$0.31} & 61.39\scriptsize $\pm$0.27 & 59.37\scriptsize $\pm$0.52  & \textbf{68.38\scriptsize $\pm$0.37} \\
    
    \bottomrule
\end{tabular}
\end{table*}

\subsection{Validation on Heterophilic Dataset}
\label{ap:hetero}

\begin{table}[htpb]
\centering
\caption{ECE (\%) with 20 bins on different models for heterophilic datasets, where the Arxiv-year dataset is also included as a representative large-scale graph. The best results are highlighted in black.}
\label{tab_ap:hetero_data}
\begin{tabular}{lcccccc}
\toprule
     Datasets    & Chameleon                  & Squirrel                   & Arxiv-year                  \\
     \midrule
GCN      & 13.82\scriptsize $\pm$0.75 & 11.67\scriptsize $\pm$0.18 & 13.34\scriptsize $\pm$0.72 \\
SCAR-GCN & \textbf{7.30\scriptsize $\pm$1.48}  & \textbf{7.25\scriptsize $\pm$0.35}  & \textbf{6.03\scriptsize $\pm$0.38}\\ \midrule
GAT  & 14.11\scriptsize $\pm$0.23 & 11.13\scriptsize $\pm$0.83 & 12.33\scriptsize $\pm$0.87 \\
SCAR-GAT & \textbf{7.93\scriptsize $\pm$0.13}  & \textbf{8.12\scriptsize $\pm$0.53}  & \textbf{5.94\scriptsize $\pm$0.75}  \\
\bottomrule
\end{tabular}
\end{table}

To comprehensively evaluate the effectiveness and generalization of the proposed method, we conduct experiments on a wide range of graph datasets. In addition to commonly used homophilic benchmarks, we further include heterophilic graphs (\ie Chameleon \citep{pei}, Squirrel \citep{pei}, and Arxiv-year \cite{roman}, wherein Arxiv-year is also a large-scale graphs) to verify the adaptability of our approach under diverse structural and label distribution settings. The experiment results are shown in Table \ref{tab_ap:hetero_data}.

From Table \ref{tab_ap:hetero_data}, we can observe that across all heterophilic datasets, the proposed SCAR consistently improves the ECE by a large margin, especially on the large-scale Arxiv-year dataset. Specifically, on Arxiv-year, SCAR reduces the ECE by 54.8\% and 51.8\% when applied to GCN and GAT, respectively. This demonstrates the generality of our method across diverse types of graph datasets.

\subsection{Validation on More Base Model}
\label{ap:base model}

\begin{table}[htpb]
    \centering
\caption{ECE (\%) with 20 bins on different models for deferent base model (\ie GCNII and FAGCN). The best results are highlighted in black.}
\label{tab_ap:base_model}
    \begin{tabular}{lccccc}
    \toprule
        ECE & Cora & Citeseer & Pubmed & Chameleon & Squirrel  \\ \midrule
        FAGCN & 15.94\scriptsize $\pm$0.17 & 20.56\scriptsize $\pm$4.5 & 5.83\scriptsize $\pm$0.63 & 15.83\scriptsize $\pm$0.63 & 10.36\scriptsize $\pm$0.24  \\ 
        SCAR-FAGCN & \textbf{4.57\scriptsize $\pm$0.60} & \textbf{4.92\scriptsize $\pm$1.4} & \textbf{3.74\scriptsize $\pm$0.71} & \textbf{6.96\scriptsize $\pm$0.84} & \textbf{7.85\scriptsize $\pm$0.43}  \\ \midrule
        GCNII & 35.51\scriptsize $\pm$0.56 & 35.30\scriptsize $\pm$1.11 & 5.72\scriptsize $\pm$1.77 & 18.13\scriptsize $\pm$0.26 & 17.79\scriptsize $\pm$0.61  \\ 
        SCAR-GCNII & \textbf{9.81\scriptsize $\pm$0.48} & \textbf{8.74\scriptsize $\pm$0.51} & \textbf{3.98\scriptsize $\pm$0.88} & \textbf{7.03\scriptsize $\pm$0.35} & \textbf{8.93\scriptsize $\pm$0.37}  \\ \bottomrule
    \end{tabular}
\end{table}

Beyond GCN and GAT, which are the most commonly used backbones in semi-supervised node classification, we further evaluate SCAR on more powerful architectures, including heterophilic GNN (i.e., FAGCN \citep{FAGCN}) and DeepGNN (i.e., GCNII \citep{gcnii}). These models incorporate advanced mechanisms such as residual connections and adaptive filtering, enabling them to capture more complex patterns in graph data. We conduct experiments on five datasets: Cora, Citeseer, and Pubmed (with 20 labeled nodes per class), as well as Chameleon and Squirrel, which are known to be heterophilous. The results are summarized in the Table \ref{tab_ap:base_model}.

The results of the Table \ref{tab_ap:base_model} demonstrate that SCAR consistently improves calibration (ECE) across these models. This is because our method does not rely on any assumptions regarding graph homophily or specific message-passing mechanisms.

\subsection{Analyzing the Trade-off Between Efficiency and Effectiveness}
\label{ap:time}

The proposed SCAR consists of two components. The first component involves a simple adjustment of the hyperparameters (i.e., class-centroid-level calibration), which does not introduce any additional complexity or runtime overhead. The second component, node-level calibration, modifies the final-layer representation of the test nodes during model inference, as described in Eq. (\ref{eq:10}). The computational complexity of Eq. (\ref{eq:10}) is minimal, as the overhead caused by calculating the weighted sum once time is negligible. To evaluate the model's actual runtime performance, we present the running time and ECE of each model based on GCN across different datasets with L/C=20 in Figure \ref{fig:time}. Note that a smaller ECE value indicates better calibration. Therefore, models positioned closer to the lower-left corner of the plot achieve a better trade-off between efficiency and effectiveness. 

\begin{figure}[htpb]
	\centering
	%scale=0.14
\includegraphics[width=0.7\columnwidth]{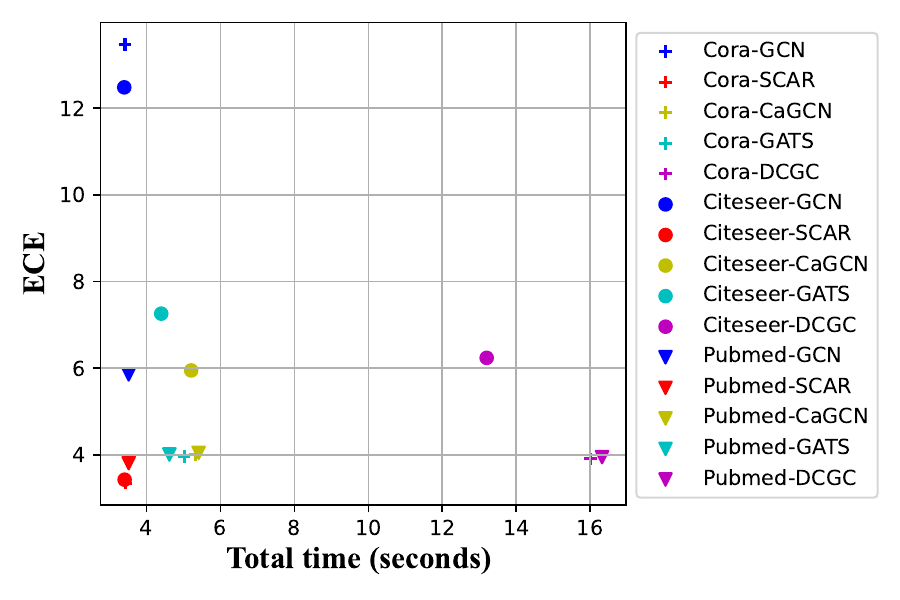}
 \caption{Scatter plot showing the relationship between model runtime and ECE, where the x-axis represents the runtime of different models on different datasets and the y-axis represents their corresponding ECE (\%).}
	\label{fig:time}	
\end{figure}

From Figure \ref{fig:time}, we have the following observations: First, existing graph calibration methods fail to achieve a good trade-off between effectiveness and efficiency. For example, the trade-off results of the most effective baseline (\ie DCGC), on all datasets (represented by the pink points) are clustered in the lower-right corner. This indicates that, while DCGC achieves well-calibrated results, it incurs significant additional time overhead. In contrast, baselines with slightly worse ECE performance, such as GATS, demonstrate improvements in runtime efficiency. Second, the proposed SCAR (represented by the red points), on all datasets is clustered in the lower-left corner. In more detail, on the time axis, the proposed SCAR and its backbone model GCN are at the same level, indicating almost no time overhead generated by the proposed SCAR. On the ECE axis, the proposed method is unparalleled, achieving a distinct advantage over the baselines. This demonstrates that the proposed method has achieved the best trade-off between efficiency and effectiveness. This is attributed to the fact that the proposed calibration method is designed based on the mechanism of generating confidence within the GNNs. As a result, it relies solely on minor modifications to the model itself, without the need for any external components, thus introducing almost no additional time overhead.

% \section{Limitations and Future Works}
% \label{ap:limitations}
% While our work introduces a novel calibration framework for GNNs, including a training-free component, it still involves certain hyperparameters whose flexible and adaptive tuning remains underexplored. In addition, GNNs inherently carry some structural biases—such as homophily and heterophily—that are known to affect prediction confidence, which our method does not fully exploit. Integrating these biases into our calibration framework could further enhance its effectiveness, which we leave as an important direction for future work.

\section{Broader Impacts}
\label{ap:broader_imp}

This paper proposes a novel calibration framework for GNNs. For the theoretical analysis of the final-layer weight decay and parameters, providing more explanations and motivations for researching the relationship between the model and confidence in the trustworthy machine learning field. Moreover, the proposed node-level calibration breaks the current reliance on temperature scaling as the sole foundation for graph post-hoc methods, introducing a new calibration framework for graph models. This can offer more choices for the following research and help design more flexible and effective calibration methods.

\section{LLM Usage Statement}
In this work, a large language model (LLM) was used to polish the writing. The LLM assisted in improving clarity and grammar, but all scientific content, interpretations, and conclusions were generated solely by the authors.

\end{document}